\documentclass{article}
\usepackage{graphicx}

\usepackage[utf8]{inputenc} 
\usepackage[T1]{fontenc}    
\usepackage{hyperref}       
\usepackage{url}            
\usepackage{booktabs}       
\usepackage{amssymb}
\usepackage{nicefrac}       
\usepackage{scrextend}      
\usepackage{subcaption}
\usepackage{enumitem}
\usepackage{algorithm}
\usepackage{algorithmic}
\usepackage{amsthm}
\usepackage{fullpage}
\usepackage[numbers,compress]{natbib}
\usepackage{preamble}
\usepackage[capitalise]{cleveref}

\crefname{theorem}{Th.}{Ths.}
\crefname{algorithm}{Alg.}{Algs.}
\Crefname{algorithm}{Algorithm}{Algorithms}
\crefname{lemma}{Lemma}{Lemmas}
\crefname{proposition}{Prop.}{Props.}
\crefname{assumption}{Assumption}{Assumptions}
\crefname{section}{Sec.}{Sections}
\Crefname{section}{Section}{Sections}

\usepackage{xcolor}
\definecolor{linkcolor}{RGB}{83,83,182}
\definecolor{citecolor}{RGB}{128,0,128}
\hypersetup{
    colorlinks=true,
    citecolor=citecolor,
    linkcolor=linkcolor,
    urlcolor=linkcolor
}

\graphicspath{{figures/}}


\title{Score-Based Change Detection for \\ Gradient-Based Learning Machines}
%
\author{Lang Liu$^1$ \qquad Joseph Salmon$^2$ \qquad Zaid Harchaoui$^1$ \vspace{0.3cm} \\ $^1$ Department of Statistics, University of Washington, Seattle \\ $^2$ IMAG, University of Montpellier, CNRS, Montpellier}
\date{}

\begin{document}

\maketitle
\begin{abstract}
    The widespread use of machine learning algorithms calls for automatic change detection algorithms to monitor their behavior over time. As a machine learning algorithm learns from a continuous, possibly evolving, stream of data, it is desirable and often critical to supplement it with a companion change detection algorithm to facilitate its monitoring and control. We present a generic score-based change detection method that can detect a change in any number of components of a machine learning model trained via empirical risk minimization. This proposed statistical hypothesis test can be readily implemented for such models designed within a differentiable programming framework. We establish the consistency of the hypothesis test and show how to calibrate it to achieve a prescribed false alarm rate. We illustrate the versatility of the approach on synthetic and real data.
\end{abstract}

\section{Introduction}
\label{sec:introduction}

Statistical machine learning models are fostering progress in numerous technological applications, \eg visual object recognition and language processing, as well as in many scientific domains, \eg genomics and neuroscience.
This progress has been fueled recently by statistical machine learning libraries designed within a differentiable programming framework such as PyTorch~\cite{paszke2019pytorch} and TensorFlow~\cite{Ten15}.

Gradient-based optimization algorithms such as accelerated batch gradient methods are then well adapted to this framework, opening up the possibility of gradient-based training of machine learning models from a continuous stream of data.
As a learning system learns from a continuous, possibly evolving, data stream, it is desirable to supplement it with tools facilitating its monitoring in order to prevent the learned model from experiencing abnormal changes.

Recent remarkable failures of intelligent learning systems such as Microsoft's chatbot \cite{metz2018microsoft} and Uber's self-driving car \cite{knight2018selfdriving} show the importance of such tools. In the former case, the initially learned language model quickly changed to an undesirable one, as it was being fed data through interactions with users. The addition of an automatic monitoring tool can potentially prevent a debacle by triggering an early alarm, drawing the attention of its designers and engineers to an abnormal change of a language model.

To keep up with modern learning machines, the monitoring of machine learning models should be automatic and effortless in the same way that the training of these models is now automatic and effortless.
Humans monitoring machines should have at hand automatic monitoring tools to scrutinize a learned model as it evolves over time. Recent research in this area is relatively limited.

We introduce a generic change monitoring method called \emph{auto-test} based on statistical decision theory.
This approach is aligned with machine learning libraries developed in a differentiable programming framework, allowing us to seamlessly apply it to a large class of models implemented in such frameworks.
Moreover, this method is equipped with a \emph{scanning} procedure, enabling it to detect \emph{small jumps} occurring on an unknown subset of model parameters.
The proofs and more details can be found in Appendix.
The code is publicly available at \emph{github.com/langliu95/autodetect}.

\textbf{Previous work on change detection.}~
Change detection is a classical topic in statistics and signal processing; see~\cite{basseville1993detection,tartakovsky2014sequential} for a survey.
It has been considered either in the offline setting, where we test the null hypothesis with a prescribed false alarm rate, or in the online setting, where we detect a change as quickly as possible.
Depending on the type of change, the change detection problem can be classified into two main categories: change in the model parameters~\cite{hinkley1970inference,deshayes1986offline} and change in the distribution of data streams~\cite{lorden1971procedures,kifer2004detecting,cunningham2012gaussian}.
We focus on testing the presence of a change in the model parameters.

Test statistics for detecting changes in model parameters are usually designed on a case-by-case basis; see~\cite{basseville1993detection,carlstein1994change,zhang1994early,csorgo1997limit,enikeeva2019high} and references therein. These methods are usually based on (possibly generalized) likelihood ratios or on residuals and therefore not amenable to differentiable programming. Furthermore, these methods are limited to \emph{large jumps}, \ie changes occurring simultaneously on all model parameters, in contrast to ours.

\begin{figure}[t]
  \centering
  \includegraphics[width=0.7\textwidth]{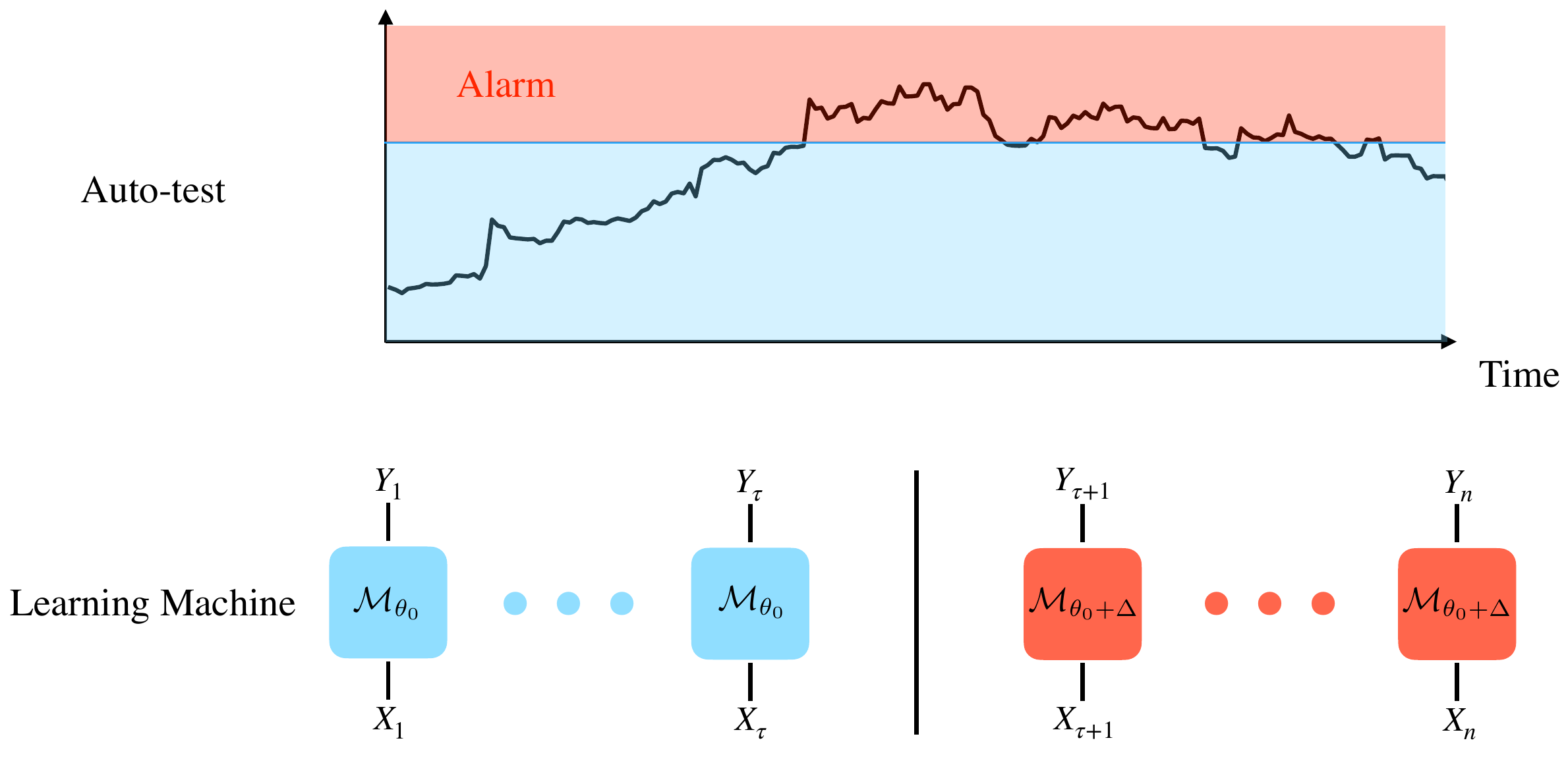}
  \caption{Illustration of monitoring a learning machine with \emph{auto-test}.}
  \label{fig:monitoring}
\end{figure}

\section{Score-Based Change Detection}
\label{sec:score-based_change_detection}

Let $\obs_{1:n} := \{\obs_k\}_{k=1}^n$ be a sequence of observations.
Consider a family of machine learning models $\{\model_\theta: \theta \in \Theta \subset \bbR^d\}$ such that $\obs_k = \model_\theta(\obs_{1:k-1}) + \varepsilon_k$,
where $\{\varepsilon_k\}_{k=1}^n$ are independent and identically distributed (\emph{i.i.d.}) random noises.
To learn this model from data, we choose a loss function $L$ and estimate model parameters by solving the problem:
\[
  \est_n := \argmin_{\theta \in \Theta} \frac1n \sum_{k=1}^n L\big(\obs_k, \model_\theta(\obs_{1:k-1})\big) \enspace.
\]
This encompasses constrained empirical risk minimization (ERM) and constrained maximum likelihood estimation (MLE).
For simplicity, we assume the model is \emph{correctly specified}, \ie there exists a true value $\tpar \in \Theta$ from which the data are generated.

Under abnormal circumstances, this true value may not remain the same for all observations.
Hence, we allow a potential parameter change in the model, that is, $\theta = \theta_k$ may evolve over time:
\begin{align*}
  \obs_k = \model_{\theta_k}(\obs_{1:k-1}) + \varepsilon_k \enspace.
\end{align*}
A time point $\tau \in [n-1] := \{1,\dots,n-1\}$ is called a \emph{changepoint} if there exists $\Delta \neq 0$ such that $\theta_k = \tpar$ for $k \le \tau$ and $\theta_k = \tpar + \Delta$ for $k > \tau$.
We say that there is a jump (or change) in the data sequence if such a changepoint exists.
We aim to determine if there exists a jump in this sequence, which we formalize as a hypothesis testing problem.
\vspace{0.2cm}
\begin{enumerate}[label=(P\arabic*),start=0,itemsep=1pt,parsep=0pt,topsep=0pt,partopsep=0pt]
  \item \label{prob:P0} Testing the presence of a jump
\end{enumerate}
\begin{align*}
  \hnull: &\ \theta_k = \theta_0 \mbox{ for all } k = 1,\dots,n \\
  \halt:&\ \mbox{after some time $\tau$, $\theta_k$ jumps from $\theta_0$ to $\theta_0 + \Delta$} \enspace.
\end{align*}

We focus on models whose loss $L(\obs_k, \model_\theta(\obs_{1:k-1}))$ can be written as $-\log{p_\theta(\obs_k|\obs_{1:k-1})}$ for some conditional probability density $p_\theta$.
For instance, the squared loss function is associated with the negative log-likelihood of a Gaussian density; for more examples, see, \eg \cite{murphy2012machine}.
In the remainder of the paper, we will work with this probabilistic formulation for convenience, and we refer to the corresponding loss as the probabilistic loss.

\noindent
\textbf{Remark.}
  Discriminative models can also fit into this framework.
  Let $\{(X_i,Y_i)\}_{i=1}^n$ be \iid observations, then the loss function reads $L(Y_k, \model_\theta(X_k))$.
  If, in addition, $L$ is a probabilistic loss, then the associated conditional probability density is $p_{\theta}(Y_k|X_k)$.

\subsection{Likelihood score and score-based testing}
Let $\ind{\{\cdot\}}$ be the indicator function.
Given $\tau \in [n-1]$ and $1 \le s \le t \le n$,
we define the conditional log-likelihood under the alternative as
\begin{align*}
  \ell_{s:t}(\theta, \Delta; \tau) := \sum_{k=s}^t \log{p_{\theta + \Delta\ind{\{k > \tau\}}}(\obs_k|\obs_{1:k-1})} \enspace.
\end{align*}
We will write $\ell_{s:t}(\theta, \Delta)$ for short if there is no confusion.
Under the null, we denote by $\ell_{s:t}(\theta) := \ell_{s:t}(\theta, 0; n)$ the conditional log-likelihood.
The \emph{score function} \wrt $\theta$ is defined as
$S_{s:t}(\theta) := \nabla_{\theta} \ell_{s:t}(\theta)$,
and the \emph{observed Fisher information} \wrt $\theta$ is denoted by
$\info_{s:t}(\theta) := -\nabla_{\theta}^2 \ell_{s:t}(\theta)$.

Given a hypothesis testing problem, the first step is to propose a \emph{test statistic} $R_n$ such that the larger $R_n$ is, the less likely the null hypothesis is true.
Then, for a prescribed \emph{significance level} $\alpha \in (0,1)$, we calibrate this test statistic by a threshold $r_0 := r_0(\alpha)$, leading to a test $\ind\{R_n > r_0\}$, \ie we reject the null if $R_n > r_0$.
The threshold is chosen such that the \emph{false alarm rate} or \emph{type I error rate} is asymptotically controlled by $\alpha$, \ie $\limsup_{n \rightarrow \infty} \prob(R_n > r_0 \mid \hnull) \le \alpha$.
We say that such a test is \emph{consistent in level}.
Moreover, we want the \emph{detection power}, \ie the conditional probability of rejecting the null given that it is false, to converge to $1$ as $n$ goes to infinity.
And we say such a test is \emph{consistent in power}.

Let us follow this procedure to design a test for Problem \ref{prob:P0}.
We start with the case when the changepoint $\tau$ is fixed.
A standard choice is the \emph{generalized score statistic} given by
\begin{align}\label{eq:score_stat}
  R_n(\tau) := S_{\post{\tau+1}}^\top(\htheta_n) \info_n(\htheta_n; \tau)^{-1} S_{\post{\tau+1}}(\htheta_n)\enspace,
\end{align}
where $\info_n(\htheta_n; \tau)$ is the \emph{partial observed information} \wrt $\Delta$ \cite[Chapter 2.9]{wakefield2013bayesian} defined as
\begin{align}\label{eq:schur_complement}
  \info_{\post{\tau+1}}(\hat{\theta}_n) - \info_{\post{\tau+1}}(\hat{\theta}_n)^\top\info_{1:n}(\hat{\theta}_n)^{-1} \info_{\post{\tau+1}}(\hat{\theta}_n) \enspace.
\end{align}

To adapt to an unknown changepoint $\tau$, a natural statistic is $R_\text{lin} := \max_{\tau \in [n-1]} R_n(\tau)$.
And, given a significance level $\alpha$, the decision rule reads $\psi_\text{lin}(\alpha) := \ind\{R_\text{lin} > H_\text{lin}(\alpha)\}$,
where $H_\text{lin}(\alpha)$ is a prescribed threshold discussed in \cref{sec:level_and_power}.
We call $R_\lin$ the \emph{linear statistic} and $\psi_\text{lin}$ the \emph{linear test}.

\subsection{Sparse alternatives}
There are cases when the jump only happens in a small subset of components of $\tpar$.
The linear test, which is built assuming the jump is large, may fail to detect such small jumps.
Therefore, we also consider \emph{sparse alternatives}.
\vspace{0.2cm}
\begin{enumerate}[resume,label=(P\arabic*),itemsep=1pt,parsep=0pt,topsep=0pt,partopsep=0pt]
\item \label{prob:P1} Testing the presence of a small jump:
\end{enumerate}
\begin{align*}
  \hnull: &\ \theta_k = \theta_0 \mbox{ for all } k = 1,\dots,n \\
  \halt:&\ \mbox{after some time $\tau$, $\theta_k$ jumps from $\theta_0$ to $\theta_0 + \Delta$,} \\
  &\ \mbox{where $\Delta$ has at most $P$ nonzero entries} \enspace.
\end{align*}
Here $P$ is referred to as the \emph{maximum cardinality}, which is set to be much smaller than $d$, the dimension of $\theta$.
We denote by $T$ the changed components, \ie $\Delta_{T} \neq 0$ and $\Delta_{[d]\backslash T} = 0$.

Given a fixed $T$, we consider the \emph{truncated statistic}
\[
  R_n(\tau, T) = S_{\post{\tau+1}}^\top(\htheta_n)_T
  \big[\info_n(\htheta_n; \tau)_{T,T}\big]^{-1}
  S_{\post{\tau+1}}(\htheta_n)_T \enspace.
\]
Let $\mathcal{T}_p$ be the collection of all subsets of size $p$ of $[d]$.
To adapt to unknown $T$, we use
\begin{align}\label{eq:unknown_T_stat}
  R_n(\tau, P; \alpha) := \max_{p \in [P]} \max_{T \in \mathcal{T}_p} H_p(\alpha)^{-1}R_n(\tau, T) \enspace,
\end{align}
where we use a different threshold $H_p(\alpha)$ for each $p \in [P]$.
Finally, since $\tau$ is also unknown, we propose $R_\text{scan}(\alpha) := \max_{\tau \in [n-1]} R_n(\tau, P; \alpha)$,
with decision rule $\psi_\text{scan}(\alpha) := \ind\{R_\text{scan}(\alpha) > 1\}$.
We call $R_\text{scan}(\alpha)$ the \emph{scan statistic} and $\psi_\text{scan}$ the \emph{scan test}.

To combine the respective strengths of these two tests, we consider the test
\begin{align}\label{eq:autograd_stat}
  \psi_{\text{auto}}(\alpha) : = \max\{\psi_\text{lin}(\alpha_l), \psi_\text{scan}(\alpha_s)\} \enspace,
\end{align}
with $\alpha_l + \alpha_s = \alpha$, and we refer to it as the \emph{auto-test}.
The choice of $\alpha_l$ and $\alpha_s$ should be based on prior knowledge regarding how likely the jump is small.
We illustrate how to monitor a learning machine with \emph{auto-test} in \cref{fig:monitoring}.

\begin{algorithm}[t]
  \centering
  \caption{Auto-test}
  \label{alg:autograd}
  \begin{algorithmic}[1]
    \STATE {\bfseries Input:} data $(\obs_i)_{i=1}^n$, log-likelihood $\ell$, levels $\alpha_l$ and $\alpha_s$, and maximum cardinality $P$.
    \FOR{$\tau = 1$ {\bfseries to} $n-1$}
    \STATE Compute $R_n(\tau)$ in \eqref{eq:score_stat} using AutoDiff.
    \STATE Compute $R_n(\tau, P; \alpha_s)$ in \eqref{eq:unknown_T_stat}.
    \ENDFOR
    \STATE {\bfseries Output:} $\psi_{\text{auto}}(\alpha) = \max\{\psi_\text{lin}(\alpha_l), \psi_\text{scan}(\alpha_s)\}$ in \eqref{eq:autograd_stat}.
  \end{algorithmic}
\end{algorithm}

\subsection{Differentiable programming}
An attractive feature of~\emph{auto-test} is that it can be computed by inverse-Hessian-vector products.
That opens up the possibility to implement it easily using a machine learning library designed within a differentiable programming framework. 
Indeed, the inverse-Hessian-vector product can then be efficiently computed via automatic differentiation; see \Cref{sec:implementation} for more details.
The algorithm to compute the \emph{auto-test} is presented in \cref{alg:autograd}.

\section{Level and Power}
\label{sec:level_and_power}

We summarize the asymptotic behavior of the proposed score-based statistics under null and alternatives.
The precise statements and proofs can be found in \Cref{sec:proofs}.

\begin{proposition*}[Null hypothesis]\label{thm:null}
  Under the null hypothesis and certain conditions, we have, for any subset $T \subset [d]$ and $\tau_n \in \mathbb{N}$ such that $\tau_n / n \rightarrow \lambda \in (0, 1)$,
  \[
    R_n(\tau_n) \rightarrow_d \chi_d^2 \quad \mbox{and} \quad R_n(\tau_n, T) \rightarrow_d \chi_{|T|}^2 \enspace,
  \]
  where we denote by $\rightarrow_d$ the convergence in distribution.
  In particular, with thresholds $H_\text{lin}(\alpha) = q_{\chi_d^2}(\alpha/n)$ and $H_p(\alpha) = q_{\chi_p^2}\big(\alpha/[\binom{d}{p}n(p+1)^2]\big)$, the tests $\psi_{\text{lin}}(\alpha)$, $\psi_{\text{scan}}(\alpha)$ and $\psi_{\text{auto}}(\alpha)$ are consistent in level, where $q_{D}(\alpha)$ is the upper $\alpha$-quantile of the distribution $D$.
\end{proposition*}

Most conditions in the above Proposition are standard.
In fact, under suitable regularity conditions, they hold true for \emph{i.i.d.}~models, hidden Markov models \cite[Chapter 12]{bickel1998asymptotic}, and stationary autoregressive moving-average models \cite[Chapter 13]{douc2014nonlinear}.

The next proposition verifies the consistency in power of the proposed tests under fixed alternatives.
\begin{proposition*}[Fixed alternative hypothesis]
\label{thm:fix_alternative}
  Assume the observations are independent, and the alternative hypothesis is true with a fixed change parameter $\Delta$.
  Let the changepoint $\tau_n$ be such that $\tau_n/n \rightarrow \lambda \in (0, 1)$.
  Under certain conditions, the tests $\psi_{\text{lin}}(\alpha)$, $\psi_{\text{scan}}(\alpha)$ and $\psi_{\text{auto}}(\alpha)$ are consistent in power.
\end{proposition*}

\section{Experiments}
\label{sec:experiments}

\begin{figure}[t]
  \centering
  \begin{minipage}{0.47\textwidth}
    \centering
    \includegraphics[width=0.49\linewidth]{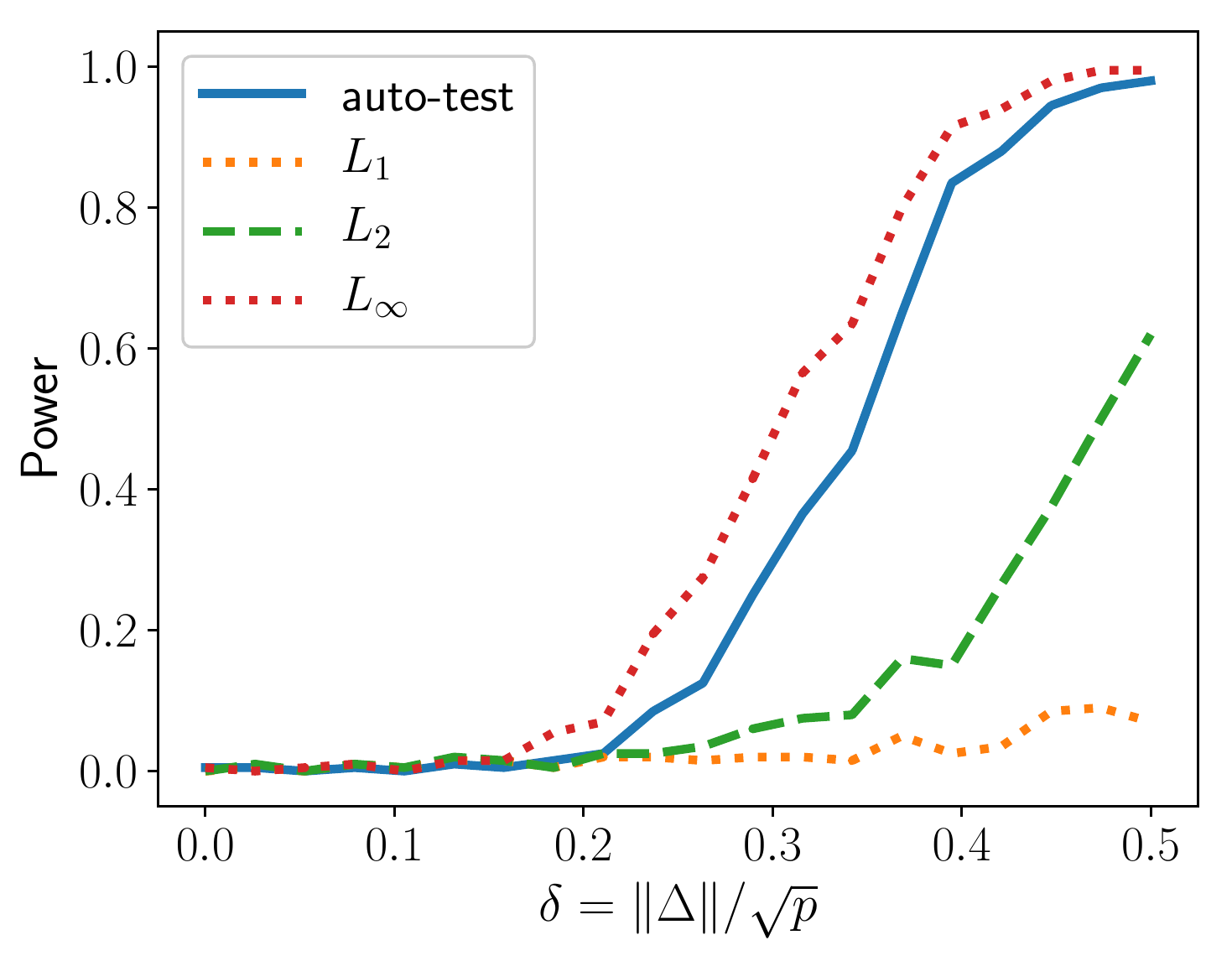}
    \includegraphics[width=0.49\linewidth]{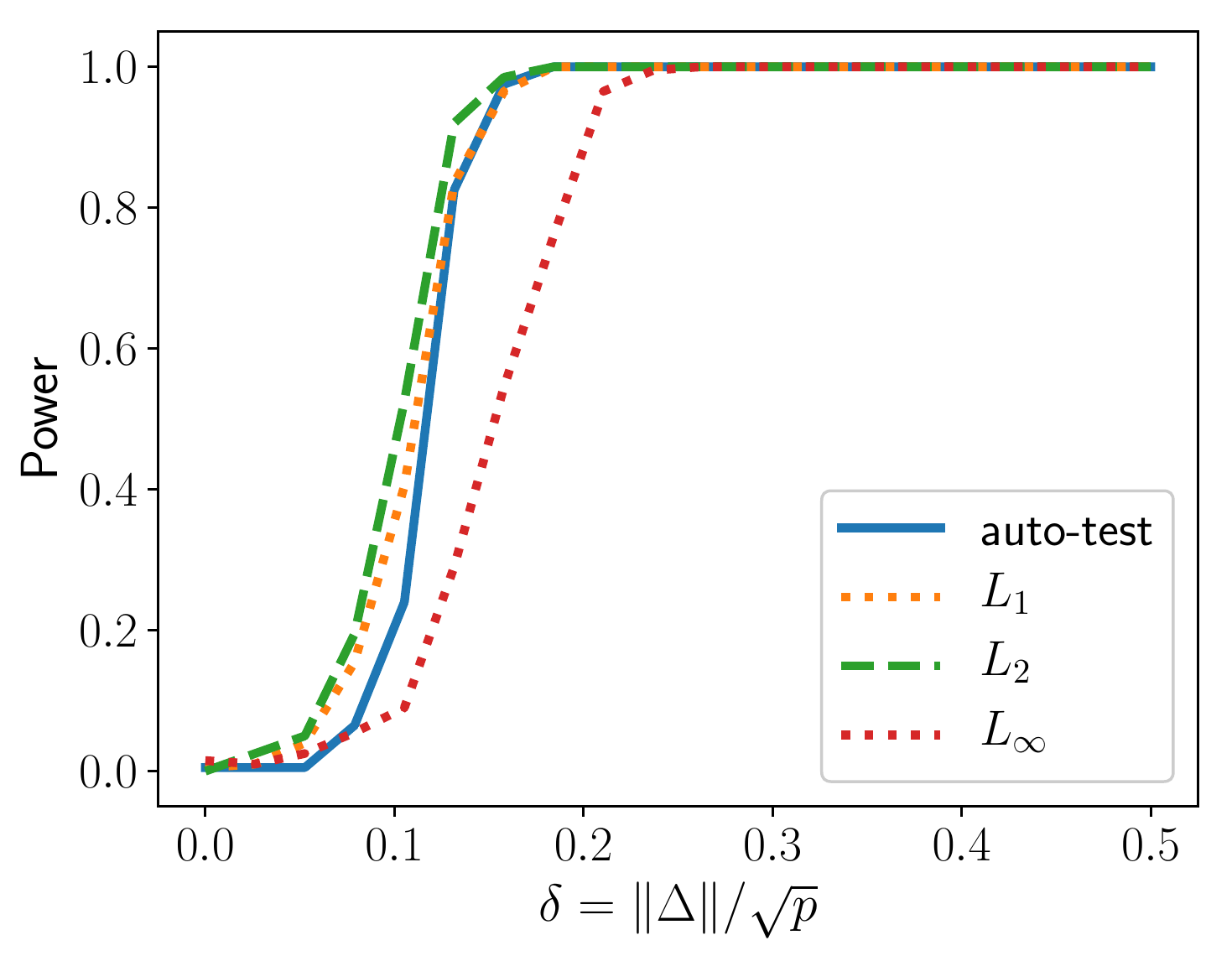}
    \caption{Power curves for a linear model with $d = 101$ (left: $p=1$; right: $p=20$). The sample size is $n = 1000$.}
    \label{fig:linear}
  \end{minipage}
  \hspace{0.5cm}
  \begin{minipage}{0.47\textwidth}
    \centering
    \includegraphics[width=0.49\linewidth]{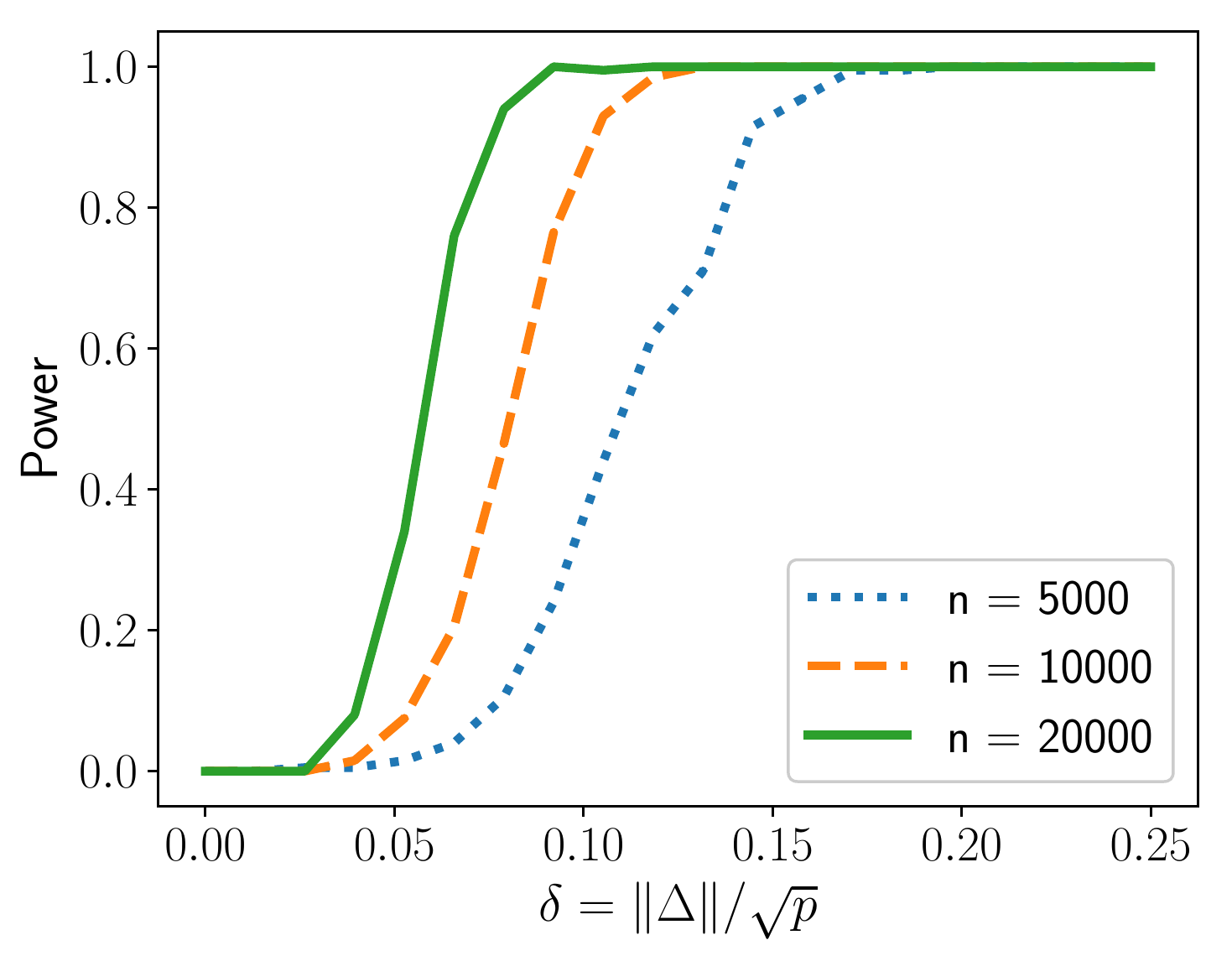}
    \includegraphics[width=0.49\linewidth]{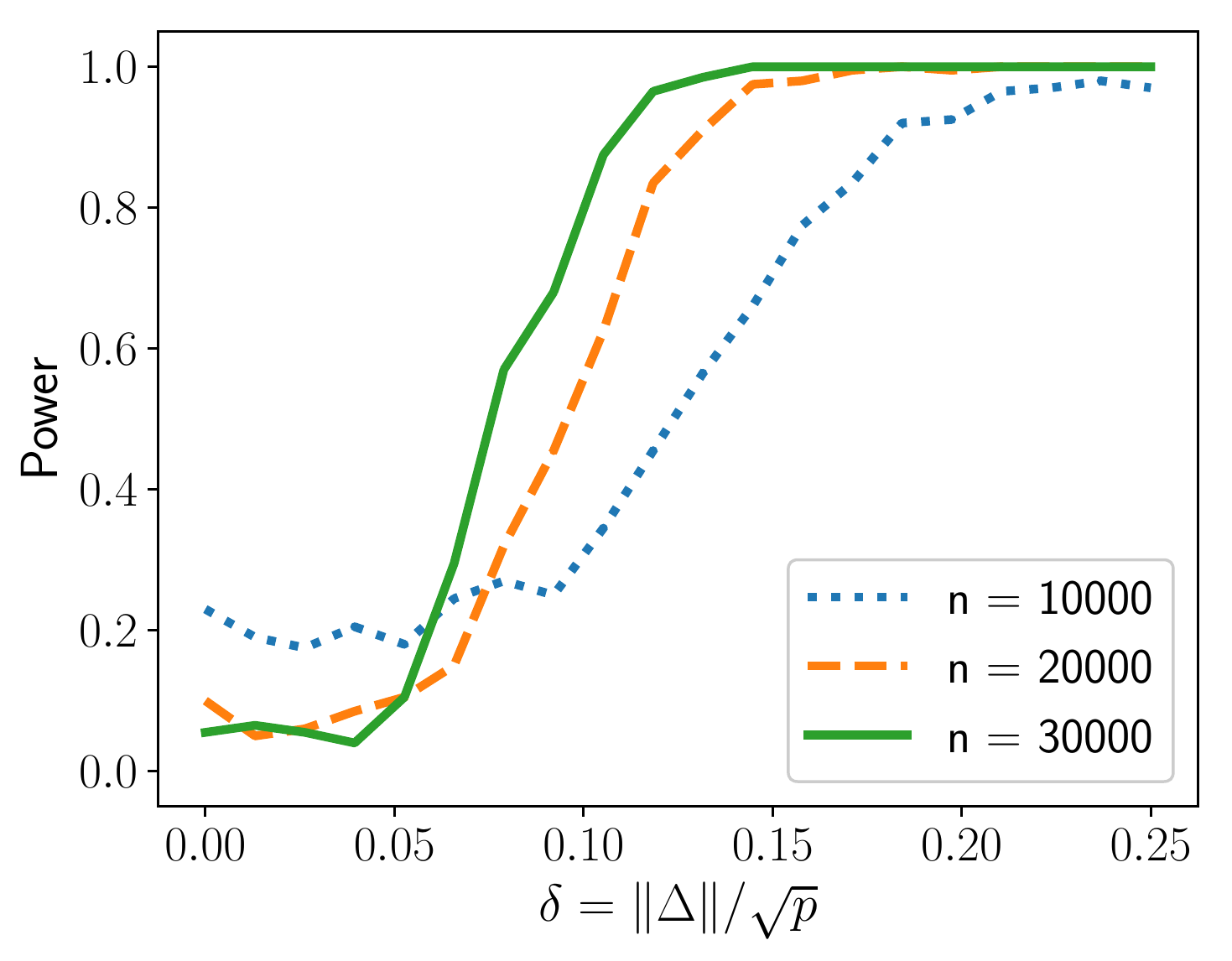}
    \caption{Power curves of the \emph{auto-test} for a text topic model with $p=1$ (left: $(N, M) = (3, 6)$; right: $(N, M) = (7, 20)$).}
    \label{fig:dependent}
  \end{minipage}
\end{figure}

We apply our approach to detect changes on synthetic data and on real data.
We summarize the settings and our findings.
More details and additional results are deferred to \Cref{sec:experiment}.

\textbf{Synthetic data.}
For each model, we generate the first half sample from the pre-change parameter $\theta_0$ and generate the second half from the post-change parameter $\theta_1$, where $\theta_1$ is obtained by adding $\delta$ to the first $p$ components of $\theta_0$.
Next, we run the proposed \emph{auto-test} to monitor the learning process,
where the significance levels are set to be $\alpha = 2\alpha_l = 2\alpha_s = 0.05$ and the maximum cardinality $P = \lfloor \sqrt{d} \rfloor$.
We repeat this procedure 200 times and approximate the detection power by rejection frequency.
Finally, we plot the power curves by varying $\delta$.
Note that the value at $\delta = 0$ is the empirical false alarm rate.

\textbf{Additive model.}
We consider a linear model with $101$ parameters and investigate two sparsity levels, $p=1$ and $p=20$.
We compare the \emph{auto-test} with three baselines given by the $L_a$ norm of the score function for $a \in \{1, 2, \infty\}$, where these baselines are calibrated by the empirical quantiles of their limiting distributions.
Note that the linear test corresponds to the $L_2$ norm with a proper normalization.
And the scan test with $P = 1$ corresponds to the $L_\infty$ norm.
As shown in \cref{fig:linear},
when the change is sparse, \ie a small jump, the \emph{auto-test} and $L_\infty$ test have similar power curves and outperform the rest of the tests significantly.
When the change is less sparse, \ie a large jump, all tests' performance gets improved, with the $L_\infty$ test being less powerful than the other three.
This empirically illustrates that (1) the $L_\infty$ test work better in detecting sparse changes, (2) the $L_1$ test and the $L_2$ test are more powerful for non-sparse changes and
(3) the \emph{auto-test} achieves comparable performance in both situations.

The proposed \emph{auto-test} is calibrated by its large sample properties and the Bonferroni correction.
This strategy tends to result in tests that are too conservative, with empirical false alarm rates largely below 0.05.
We also use resampling-based strategy to calibrate the \emph{auto-test}, \ie generating bootstrap samples and calibrating the test using the quantiles of the test statistics evaluated on bootstrap samples.
The empirical false alarm rates are around 0.065 for both $p = 1$ and $p = 20$.

\begin{table}[t]
  \caption{Decision of the scan test on the TV-show application: each (row, column) pair stands for a concatenation; ``R'' means reject and ``N'' means not reject. Red entries are false alarms.}
  \label{tab:tv_rejection}
  \centering
  \begin{tabular}{lllllllll}
  \toprule
              & F1 & F2 & M1 & M2 & S1 & S2 & D1 & D2 \\
  \midrule
  F1 & N       & N       & N      & N      & R         & R         & R         & R         \\
  F2 & N       & N       & \textcolor{red}{\emph{R}}      & N      & R         & R         & R         & R         \\
  M1 & N       & \textcolor{red}{\emph{R}}       & N      & N      & R         & R         & R         & R         \\
  M2 & N       & N       & N      & N & R         & R         &R         & R         \\
  S1 & R        & R        & R       & R       & N       & N        &\textcolor{red}{\emph{R}}        & \textcolor{red}{\emph{R}}        \\
  S2 & R        & R        & R       & R       & N        & N        &\textcolor{red}{\emph{R}}        & \textcolor{red}{\emph{R}}        \\
  D1 & R        & R        & R       & R       & \textcolor{red}{\emph{R}}    & \textcolor{red}{\emph{R}}     &N        & \textcolor{red}{\emph{R}}        \\
  D2 & R        & R        & R       & R       & \textcolor{red}{\emph{R}}    & \textcolor{red}{\emph{R}}     &N        & N        \\
  \bottomrule
  \end{tabular}
  \vskip -0.1in
\end{table}

\textbf{Text topic model.}
We consider a text topic model \cite{stratos2015model} and investigate the \emph{auto-test} for different sample sizes.
This model is a hidden Markov model whose emission distribution has a special structure.
We examine two parameter schemes: $(N, M) \in \{(3, 6), (7, 20)\}$, where $N$ is the number of hidden states and $M$ is the number of categories of the emission distribution,
and $p$ is set to be $1$.
As demonstrated in \cref{fig:dependent},
for the first scheme, all tests have small false alarm rates, and their power rises as the sample size increases.
For the second scheme, the false alarm rate is out of control in the beginning, but this problem is alleviated as the sample size increases.
This empirically verifies that the \emph{auto-test} is consistent in both level and power even for dependent data.

\textbf{Real data.}
We collect subtitles of the first two seasons of four TV shows---Friends (F), Modern Family (M), the Sopranos (S) and Deadwood (D)---where the former two are viewed as ``polite'' and the latter two as ``rude''.
For every pair of seasons, we concatenate them, and train the text topic model with $N = \lfloor \sqrt{n/100} \rfloor$ and $M$ being the size of vocabulary built from the training corpus.
The task is to detect changes in the rudeness level.
As an analogy, the text topic model here corresponds to a chatbot, and subtitles are viewed as interactions with users.
We want to know whether the conversation gets rude as the chatbot learns from the data.

The linear test, \ie the \emph{auto-test} with $\alpha_l = \alpha$ and $\alpha_s = 0$, does a perfect job in reporting shifts in rudeness level.
However, it has a high false alarm rate ($27 / 32$).
This is expected since the linear test may capture the difference in other aspects, \eg topics of the conversation.
The scan test, \ie the \emph{auto-test} with $\alpha_l = 0$ and $\alpha_s = \alpha$, has much lower false alarm rate ($11/32$).
Moreover, as shown in \cref{tab:tv_rejection}, there are only two false alarms in the most interesting case, where the sequence starts with a polite show.
We note that this problem is hard, since rudeness is not the only factor that contributes to the difference between two shows.
The results are promising since we benefit from exploiting the sparsity even without knowing which model components are related to the rudeness level.

\section{Conclusion}
\label{sec:conclusion}

We introduced a change monitoring method called \emph{auto-test} that is well suited to machine learning models implemented within a differentiable programming framework.
The experimental results show that the calibration of the test statistic based on our theoretical arguments brings about change detection test that can capture small jumps in the parameters of various machine learning models in a wide range of statistical regimes.
The extension of this approach to penalized maximum likelihood or regularized empirical risk estimation in a high dimensional setting is an interesting venue for future work.

\section*{Acknowledgments}
This work was supported by NSF DMS 2023166, DMS 1839371, DMS 1810975, CCF 2019844, CCF-1740551, CIFAR-LMB, and research awards.

\clearpage

\bibliographystyle{abbrvnat}
\bibliography{refs}

\clearpage

\appendix

\section*{Outline of Appendix}
The outline of the appendix is as follows.
\Cref{sec:implementation} discusses implementation details of the proposed test and its complexity analysis.
\Cref{sec:proofs} is devoted to prove the level and power consistency of the \emph{auto-test}.
\Cref{sec:experiment} provides addition experiment results on a times series model and a hidden Markov model.

\section{Implementation Details}
\label{sec:implementation}

For simplicity of the notation, we write $\hat S_{i:j} = S_{i:j}(\est_n)$ and $\hat \calI_{i:j} = \calI_{i:j}(\est_n)$ throughout this section.

\subsection{Algorithmic aspects}
\label{sub:implement}

\begin{algorithm}[b]
  \centering
  \caption{Linear statistic with the direct approach}
  \label{alg:linear_naive}
  \begin{algorithmic}[1]
    \STATE {\bfseries Input:} Data $(W_k)_{k=1}^n$, log-likelihood $\ell$, and MLE $\est_n$.
    \STATE Compute $\hat S_{1:n}$ by calling AutoDiff on $\ell_{1:n}(\est_n)$.
    \STATE Compute $\hat \calI_{1:n}$ by calling $d$ times AutoDiff on $\hat S_{1:n}$.
    \STATE Compute $\hat \calI_{1:n}^{-1}$.
    \FOR{$\tau = 1, \dots, n-1$}
      \STATE Compute $\hat S_{1:\tau}$ by calling AutoDiff on $\ell_{1:\tau}(\est_n)$, and then compute $\hat S_{\tau+1:n} = \hat S_{1:n} - \hat S_{1:\tau}$.
      \STATE Compute $\hat I_{1:\tau}$ by calling $d$ times AutoDiff on $\hat S_{1:\tau}$.
      \STATE Compute $R_n(\tau)$ in \eqref{eq:a:linear_stat}.
    \ENDFOR
    \STATE Compute $R_{\lin}$ in \eqref{eq:a:linear_stat}.
    \STATE {\bfseries Output:} $R_{\lin}$.
  \end{algorithmic}
\end{algorithm}

Recall that the computation of \emph{auto-test} boils down to the computation of the linear statistic
\begin{align}\label{eq:a:linear_stat}
  R_{\lin} := \max_{\tau \in [n-1]} R_n(\tau) := \max_{\tau \in [n-1]} \hat S_{\tau+1:n}^\top \hat{\calI}_{n, \tau}^{-1} \hat S_{\tau+1:n} \enspace,
\end{align}
where $\hat \calI_{n,\tau} = \hat \calI_{1:\tau} - \hat \calI_{1:\tau} \hat \calI_{1:n}^{-1} \hat \calI_{1:\tau}$,
and the scan statistic
\begin{align}\label{eq:a:scan_stat}
  R_{\text{scan}} := \max_{\tau \in [n-1]} \max_{T \subset [d], |T| \le P} R_n(\tau, T) := \max_{\tau \in [n-1]} \max_{T \subset [d], |T| \le P} [\hat{S}_{\tau+1:n}]_{T}^\top [\hat{\calI}_{n, \tau}]_{T, T}^{-1} [\hat{S}_{\tau+1:n}]_{T} \enspace,
\end{align}
where $[\hat{\calI}_{n, \tau}]_{T, T}^{-1}$ should be understood as $\{[\hat{\calI}_{n, \tau}]_{T, T}\}^{-1}$.

To compute these two statistics, a direct approach is to compute the full Fisher information matrices and then invert them.
Another approach consists in solving the linear system $\calI^{-1} S$ by the conjugate gradient algorithm.
We refer to it as the AutoDiff-friendly approach.

In the following, we analyze the time and space complexity of these two approaches in the most general case, that is, the sequence $\{\hat \calI_{1:t}\}_{t=1}^n$ does not admit a recursion that could simplify its computation.
For every $t \in [n]$, we assume the computational graph of the log-likelihood is of size $tC_1$ with $C_1 \ge d$.
As a result, computing $\hat{S}_{1:t}$ by AutoDiff takes $\bigO{tC_1}$ time and $\bigO{tC_1}$ space.
Similarly, we assume the computational graph of the score $\hat S_{1:t}$ is of size $t C_2$.
Then the time and the space complexity of computing $\hat{\calI}_{1:t}(\theta)$ are $\bigO{tdC_2}$ and $\bigO{t C_2}$, respectively, if we call AutoDiff on $\hat S_{1:t}^\top e_k$ for each $k \in [d]$, where $\{e_k\}_{k=1}^d$ is the standard basis of $\bbR^d$.
We usually have $C_2 > C_1$ when $\ell(\theta)$ is not linear in $\theta$.

\textbf{Computing the linear statistic using automatic differentiation.}
The main steps to compute the linear statistic with the direct approach are summarized in \Cref{alg:linear_naive}.
The most time-consuming step is the for loop in steps 5-9.
For each $\tau \in [n-1]$, steps 6-8 take time $\bigO{\tau C_1}$, $\bigO{\tau d C_2}$ and $\bigO{d^3}$, respectively.
Therefore, the overall time complexity of \Cref{alg:linear_naive} is $\bigO{n^2 d C_2 + nd^3}$.
The most space-consuming steps are to store the computational graph of $\hat S_{1:n}$ with complexity $\bigO{n C_2}$, and to store the full Fisher information matrix with complexity $\bigO{d^2}$.
Consequently, the overall space complexity is $\bigO{nC_2 + d^2}$.

We now investigate the AutoDiff-friendly approach.
According to the Woodbury matrix identity, we have
\begin{align*}
  \hat \calI_{n,\tau}^{-1} = \hat \calI_{1:\tau}^{-1} + \calI_{\tau+1:n}^{-1} \enspace.
\end{align*}
The statistic $R_n(\tau)$ then reads
\begin{align}\label{eq:a:linear_symmetric}
  R_n(\tau) = \hat S_{\tau+1:n}^\top \hat \calI_{1:\tau}^{-1} \hat S_{\tau+1:n} + \hat S_{\tau+1:n}^\top \hat \calI_{\tau+1:n}^{-1} \hat S_{\tau+1:n} \enspace.
\end{align}
To compute $\hat \calI_{1:\tau}^{-1} \hat S_{\tau+1:n}$, we apply the conjugate gradient algorithm to solve the problem
\begin{align*}
  \min_{x} \left\{\frac12 x^\top \hat \calI_{1:\tau} x - \hat S_{\tau+1:n}^\top x \right\} \enspace.
\end{align*}
Each iteration of the conjugate gradient algorithm requires evaluating $\hat \calI_{1:\tau} x$, which can be obtained by calling AutoDiff on $\hat S_{1:\tau}^\top x$ with $\bigO{\tau C_2}$ time and $\bigO{\tau C_2}$ space.
Moreover, it converges in $M \le d$ steps.
As a result, computing $\hat \calI_{1:\tau}^{-1} \hat S_{\tau+1:n}$ takes $\bigO{\tau M C_2}$ time and $\bigO{\tau C_2}$ space.
The steps to compute $\hat \calI_{\tau+1:n}^{-1} \hat S_{\tau+1:n}$ is similar since $\hat \calI_{\tau+1:n} x = \hat \calI_{1:n} x - \hat \calI_{1:\tau} x$.
Hence, we may compute $R_\text{lin}$ as in \Cref{alg:linear_ours}.
The most expensive steps are the computation of $\hat \calI_{1:n}$ in step 3 and the for loop in steps 4-7.
Step 3 takes $\bigO{n d C_2}$ time and $\bigO{nC_2 + d^2}$ space.
For each $\tau \in [n-1]$, the steps within the for loop, as discussed above, take $\bigO{\tau M C_2}$ time and $\bigO{\tau C_2}$ space.
Hence, the overall time and space complexities are $\bigO{n^2 M C_2 + ndC_2}$ and $\bigO{n C_2 + d^2}$.
Since $M \le d$, it is clear that this approach is more efficient than the direct one.

\begin{algorithm}[t]
  \centering
  \caption{Linear statistic with the conjugate gradient algorithm}
  \label{alg:linear_ours}
  \begin{algorithmic}[1]
    \STATE {\bfseries Input:} Data $(W_k)_{k=1}^n$, log-likelihood $\ell$, and MLE $\est_n$.
    \STATE Compute $\hat S_{1:n}$ by calling AutoDiff on $\ell_{1:n}(\est_n)$.
    \STATE Compute $\hat \calI_{1:n}$ by calling $d$ times AutoDiff on $\hat S_{1:n}$.
    \FOR{$\tau = 1, \dots, n-1$}
      \STATE Compute $\hat S_{1:\tau}$ by calling AutoDiff on $\ell_{1:\tau}(\est_n)$, and then compute $\hat S_{\tau+1:n} = \hat S_{1:n} - \hat S_{1:\tau}$.
      \STATE Compute $R_n(\tau)$ in \eqref{eq:a:linear_symmetric} by the conjugate gradient algorithm.
    \ENDFOR
    \STATE Compute $R_{\lin}$ in \eqref{eq:a:linear_stat}.
    \STATE {\bfseries Output:} $R_{\lin}$.
  \end{algorithmic}
\end{algorithm}

\textbf{Computing the scan statistic using automatic differentiation.}
Computing the scan statistic exactly may be exponentially expensive in the parameter dimension $d$, since it involves a maximization over all subsets of $[d]$ with cardinality $p \le P$.
Alternatively, we approximate the maximizer of $\max_{\abs{T}=p} R_n(\tau, T)$, say $T_p$, by the indices of the largest $p$ components in
\begin{align}\label{eq:diag_approx}
  v(\tau) := \hat S_{\post{\tau+1}}^\top \text{diag}\{\hat \calI_{n, \tau}\}^{-1} \hat S_{\post{\tau+1}} \enspace.
\end{align}
That is, we consider all $T$ with $\abs{T} = 1$, and approximate the maximizer $T_p$ by the union of the ones that give the largest $p$ values of $R_n(\tau, T)$.
We show in \Cref{sec:proofs} that this approximation is accurate if the difference between the largest eigenvalue and the smallest eigenvalue of $\hat \calI_{n, \tau}$ is small compared to $\lVert \hat S_{\tau+1:n} \rVert^2$.
Formally, we approximate $R_{\scan}$ by
\begin{align}\label{eq:a:approx_scan}
  R_{\scan} \approx \max_{\tau \in [n-1]} \max_{p \le P} R_n(\tau, T_{\tau, p}) := \max_{\tau \in [n-1]} \max_{p \le P}\ [\hat S_{\tau+1:n}]_{T_{\tau, p}}^\top [\hat \calI_{n,\tau}]_{T_{\tau, p}, T_{\tau, p}}^{-1} [\hat S_{\tau+1:n}]_{T_{\tau, p}},
\end{align}
where $T_{\tau, p}$ corresponds to the largest $p$ indices of $v(\tau)$.

Note that, in order to compute the scan statistic in a similar fashion as the linear statistic, we may modify the normalizing matrix $[\hat \calI_{n, \tau}]_{T, T}^{-1}$ as
\begin{align}\label{eq:a:modify_normalize}
  [\hat \calI_{1:\tau}]_{T, T}^{-1} + [\hat \calI_{\tau+1:n}]_{T, T}^{-1} \enspace.
\end{align}
It can be shown that \eqref{eq:a:modify_normalize} converges to the same limit as $\hat \calI_{1:\tau}$ under the null so that the calibration discussed in \Cref{sec:proofs} remains valid.
Hence, for the direct approach, we need the following steps to compute $R_{\scan}$ in addition to \Cref{alg:linear_naive}: 1) sort $v(\tau)$ and obtain $\{T_{\tau, p}\}_{p \in [P]}$, and 2) compute $\{R_n(\tau, T_{\tau, p})\}_{p \in [P]}$, for each $\tau \in [n-1]$.
For the AutoDiff-friendly approach, after we sort $v(\tau)$, we can again compute $[\hat \calI_{1:\tau}]_{T_{\tau, p}, T_{\tau, p}}^{-1} [\hat S_{\tau+1:n}]_{T_{\tau, p}}$ by the conjugate gradient algorithm.
Since $P \ll d$, the time complexity and space complexity of the linear statistic dominate the ones of the scan statistic.

When the observations $\{W_k\}_{k=1}^n$ are independent, the score $\hat S_{1:\tau}$ and information $\hat \calI_{1:\tau}$ can be computed recursively in the direct approach.
As a result, computing the \emph{auto-test} with the direct approach will be more efficient if $n \gg d$.

\subsection{Running times}
\label{sub:running_time}

We then compare empirically the running time of the two approaches.
For simplicity, we focus on applying them to compute $\hat \calI_{1:n}^{-1} z$ for some randomly generated vector $z \in \bbR^d$.
We consider two models: 1) a linear model $Y = \theta^\top X + \varepsilon$ with log-likelihood (up to a constant) $\ell(\theta) = -(Y - \theta^\top X)^2$ (or quadratic loss), where $\theta \in \mathbb{R}^d$;
2) a multilayer perceptron (MLP) with the following structure: $x_0 \rightarrow x_1 = \sigma(A_1x_0 + b_1) \rightarrow x_2 = A_2 x_1 + b_2$, where $x_0 \in \bbR^r$ is the input vector, $A_1 \in \bbR^{r \times [r/2]}$, $b_1 \in \bbR^{[r/2]}$, $A_2 \in \bbR^{[r/2] \times 1}$ and $b_2 \in \bbR$.
Hence, there are $d = r[r/2] + 2[r/2] + 1$ parameters in this model.
The loss function is again chosen as the quadratic loss.
For each of the two models,
we generate $n$ \iid observations from this model and use the two approaches (``Direct'' and ``Ours'') to compute $\hat \calI_{1:n}^{-1}z$.
For the conjugate gradient algorithm, we set the target accuracy to be $10^{-7}$ and set the maximum number of iterations to be $2d$.
For each pair of $(n, d)$, we repeat the experiment $5$ times and report the average running time with standard error in \cref{fig:runtime}.
The experiments are performed on a machine with 32 2.8GHz Intel Core i9 CPUs.

For the linear model, the information matrix is well-conditioned, so it took strictly less than $d$ iterations for the conjugate gradient algorithm to converge.
This contributes to the significant improvement on the running time compared to the direct approach.
As for the MLP, the information matrix is ill-conditioned, so it usually took the conjugate gradient algorithm the maximum number of iterations, \ie $2d$, to converge.
In fact, the running time of the AutoDiff-friendly approach is about twice larger than the direct approach.
Note that this time could be potentially reduced by computing the inverse-matrix-vector product inexactly.

\begin{figure}[t]
  \centering
  \includegraphics[width=\textwidth]{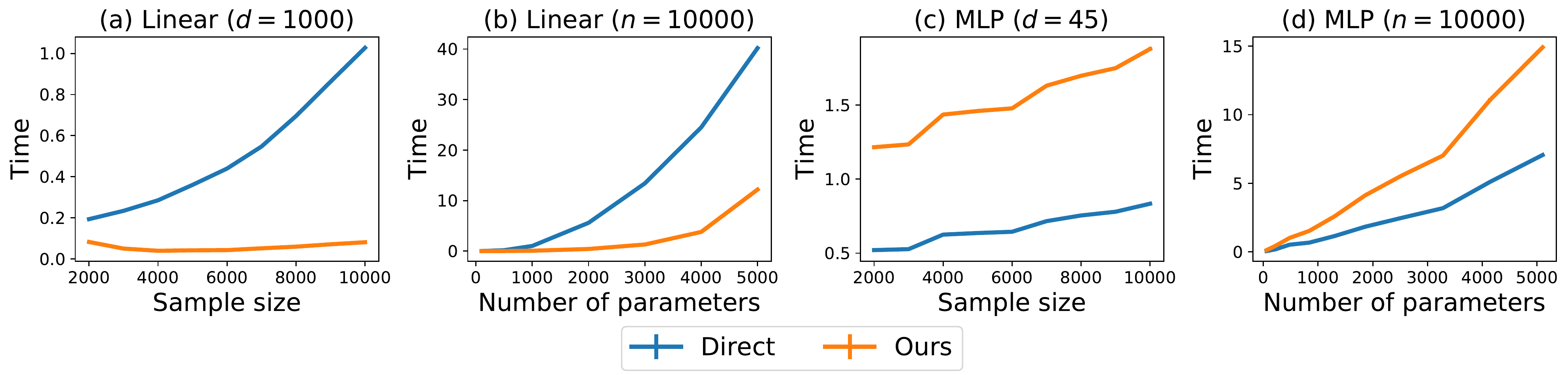}
  \caption{\textbf{(a)} Running time versus sample size for linear models with $d = 1000$; \textbf{(b)} Running time versus number of parameters for linear models with $n=10000$; \textbf{(c)} Running time versus sample size for multilayer perceptrons with $d = 1035$ ($r=45$); \textbf{(d)} Running time versus number of parameters for multilayer perceptrons with $n=10000$.}
  \label{fig:runtime}
\end{figure}

\subsection{Examples}
\label{sub:examples}

\begin{example}[Text topic model]
  The text topic model introduced in \cite{stratos2015model} is a hidden Markov model with transition probability $q$ and emission probability $g$, supported respectively on finite sets $[N]$ and $[M]$.
  Moreover, it satisfies the so-called Brown assumption: for each observation $X \in [M]$, there exists a unique hidden state $\calH(X) \in [N]$ such that $g(X|\calH(X)) > 0$ and $g(X|h) = 0$ for all $h \neq \calH(X)$.
  The authors proposed a class of spectral methods to recover approximately the map $\hat{\calH}$ up to permutation.
  Consequently, the log-likelihood can be computed as
  \[
    \ell_n(\theta) = \sum_{k=1}^n \log{q(\hat{\calH}_k|\hat{\calH}_{k-1})} + \log{g(X_k|\hat{\calH}_k)} \enspace,
  \]
  where $X_0$ is assumed to be known.
\end{example}

\begin{example}[Time series model]
  Consider an autoregressive moving-average (ARMA) model:
  \[
    X_t = \sum_{i=1}^r \phi_i X_{t-i} + \varepsilon_t + \sum_{i=1}^q \varphi_i \varepsilon_{t-i} \enspace,
  \]
  where $\{\varepsilon_t\}$ are i.i.d.~standard norm random variables.
  Let $\theta = (\phi; \varphi)$.
  Assume that $r \ge q$ and $X_{1:r}$ is completely known.
  Then the log-likelihood reads:
  \[
    \ell_n(\theta) = -\frac1{2} \sum_{t=r+1}^n \varepsilon_t^2 + C \enspace,
  \]
  where $\varepsilon_t = X_t - \sum_{i=1}^r \phi_i X_{t-i} - \sum_{i=1}^q \varphi_i \varepsilon_{t-i}$.
\end{example}

\begin{example}[Hidden Markov model]
  Suppose that observations $(Y_k)_{k=1}^n$ are from a hidden Markov model (HMM)
  \begin{align*}
    X_k \sim Q(X_{k-1}, \cdot) \quad \mbox{and} \quad Y_k \sim G(X_k, \cdot),
  \end{align*}
  where $G$ and $Q$ are the transition distribution and emission distribution, respectively.
  For simplicity, we write $q_{x, x'} = Q(x, x')$ and $g_k(x) = G(x, Y_k)$.
  Its log-likelihood function can be computed by the \emph{normalized forward recursion} \cite[Chapter 3]{Cappe2005inference}: $\ell_n(\theta) = \sum_{k=1}^n \log{c_k}$ where, recursively,
  \begin{align*}
      c_k &= \sum_{x_{k-1},x_k=1}^M\phi_{k-1}(x_{k-1}) q_{x_{k-1}, x_k} g_k(x_k) \\
      \phi_k(x_k) &= c_k^{-1} \sum_{x_{k-1} = 1}^M \phi_{k-1}(x_{k-1}) q_{x_{k-1}, x_k} g_k(x_k), \quad \forall x_k \in [M] \enspace,
  \end{align*}
  with initial conditions
  \begin{align*}
      c_0 &= \sum_{x_0=1}^M g_0(x_0)\nu(x_0) \\
      \phi_0(x_0) &= c_0^{-1} g_0(x_0)\nu(x_0), \quad \forall x_0 \in [M] \enspace.
  \end{align*}
\end{example}

\section{Theoretical Results and Proofs}
\label{sec:proofs}

\subsection{Null hypothesis}

This section is devoted to determine thresholds for the linear test, scan test, and \emph{auto-test} so that they are consistent in level.
For this purpose, we first derive the limiting distribution of $R_n(\tau_n)$ for any sequence $(\tau_n)_{n \ge 1}$ such that $\tau_n/n \rightarrow \lambda \in (0, 1)$.
We then determine the thresholds based on the limiting distribution.

\begin{assumption}\label{asmp:null}
  Let $\obs_1,\dots,\obs_n$ be a time series with a correctly specified model $\{p_{\theta}: \theta \in \Theta \subset \bbR^d\}$.
  Suppose that the true parameter $\theta_0 \in \interior(\Theta)$ (the interior of $\Theta$), and that the following assumptions hold:
  \begin{enumerate}[label=A\arabic*,itemsep=1pt,parsep=0pt,topsep=0pt,partopsep=0pt]
    \item\label{cond:1}: ~
    $\Theta$ contains an open neighborhood $\Theta_0$ on which $\ell_n(\theta) := \log p_{\theta}(\obs_1,\dots,\obs_n)$ is twice continuously differentiable and $\lVert n^{-1} \nabla^3_\theta \ell_n(\theta) \rVert \overset{a.s.}{\le} M(W_1, \dots, W_n) = O_p(1)$ for every $\theta \in \Theta_0$.
    \item \label{cond:2}: ~
    $- \nabla_{\theta}^2 \ell_n(\theta_0)/n \rightarrow_p \mathcal{I}_0$ where $\mathcal{I}_0 \in \bbR^{d\times d}$ is positive definite.
    \item \label{cond:3}: ~
    The MLE $\htheta_n$ exists and $\sqrt{n}(\htheta_n - \theta_0) \rightarrow_d \mathcal{N}(0, \mathcal{I}_0^{-1})$ (convergence in distribution).
    \item \label{cond:4}: ~
    The normalized score can be written as a sum of a martingale difference sequence, up to an $o_p(1)$ term, \emph{\wrt}to some filtration $\{\mathcal{F}_t\}_{t \in \Z}$, that is,
    \begin{align*}
      Z_n(\theta_0)
      := \frac{1}{\sqrt{n}}S_{n}(\theta_0)
        = \frac{1}{\sqrt{n}}\nabla_{\theta} \ell_{n}(\theta_0)
        = \sum_{k=1}^n \frac{M_k}{\sqrt{n}} + o_p(1) \enspace,
    \end{align*}
    where $\Ex[M_k|\mathcal{F}_{k-1}] = 0, \forall k \in [n]$.
    In addition, this martingale difference sequence satisfies the Lindeberg conditions:
    \begin{enumerate}[label=\ref*{cond:4}-(\alph*)]
      \item \label{cond:4a}: ~ $n^{-1} \sum_{k=1}^n \Ex[M_k M_k^\top|\mathcal{F}_{k-1}] \rightarrow_p \mathcal{I}_0$ and
      \item \label{cond:4b}: ~ $\forall \varepsilon > 0$ and $\alpha \in \bbR^d,
        n^{-1} \sum_{k=1}^n
        \Ex\left[(\alpha^\top M_k)^2
        \ind{\{\lvert \alpha^\top M_k \rvert > \sqrt{n}\varepsilon\}} | \mathcal{F}_{k-1} \right] \rightarrow_p 0.$
    \end{enumerate}
  \end{enumerate}
\end{assumption}

A useful sufficient condition for Assumption \ref{cond:4} is given below.
\begin{lemma}
  Assume that the normalized score can be written as
  \begin{align*}
    Z_n(\theta_0) = \sum_{k=1}^n \frac{M_k}{\sqrt{n}} + o_p(1) \enspace,
  \end{align*}
  where $\{M_k\}_{k \in \mathbb{N}_+}$ is a stationary and ergodic martingale difference sequence \wrt its natural filtration, then Assumption \ref{cond:4} holds true. 
\end{lemma}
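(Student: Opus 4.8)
The plan is to verify each part of Assumption~\ref{cond:4} separately. The decomposition $Z_n(\theta_0) = \sum_{k=1}^n M_k/\sqrt{n} + o_p(1)$ with $\Ex[M_k\mid\mathcal{F}_{k-1}] = 0$ is exactly the hypothesis of the lemma, so only the two Lindeberg-type statements \ref{cond:4a} and \ref{cond:4b} require an argument. First I would pass to the canonical two-sided stationary and ergodic extension of $\{M_k\}$ indexed by $\Z$, so that the natural filtration $\mathcal{F}_{k-1} = \sigma(M_j : j \le k-1)$ is the full past; the martingale-difference property with respect to this enlarged filtration follows from the one-sided property together with L\'evy's upward martingale convergence theorem, and ergodicity is preserved under the extension. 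I would also use throughout the square-integrability of the score increments, $\Ex\lVert M_1\rVert^2 < \infty$ (implicit in the asymptotic-normality setup, and which identifies the target $\mathcal{I}_0$ with $\Ex[M_1 M_1^\top]$ through the usual identity equating the score covariance with the Fisher information), together with the elementary fact that a shift-covariant measurable functional of a stationary ergodic sequence is itself stationary and ergodic, so that Birkhoff's pointwise ergodic theorem applies to it.

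For \ref{cond:4a}, I would set $Y_k := \Ex[M_k M_k^\top\mid\mathcal{F}_{k-1}]$; since $\mathcal{F}_{k-1}$ is the full past, $\{Y_k\}$ is stationary and ergodic with $\Ex\lVert Y_1\rVert \le \Ex\lVert M_1\rVert^2 < \infty$, and Birkhoff gives $n^{-1}\sum_{k=1}^n Y_k \rightarrow \Ex[M_1 M_1^\top] = \mathcal{I}_0$ almost surely, hence in probability. For \ref{cond:4b}, the summand depends on $n$ through the truncation level $\sqrt{n}\varepsilon$, so Birkhoff does not apply directly; the trick is to dominate by a fixed truncation. Fixing $\alpha \in \bbR^d$, $\varepsilon > 0$, and an auxiliary constant $c > 0$, I would set $g_{n,k} := \Ex[(\alpha^\top M_k)^2\,\ind\{\lvert\alpha^\top M_k\rvert > \sqrt{n}\varepsilon\}\mid\mathcal{F}_{k-1}]$ and $h_k^{(c)} := \Ex[(\alpha^\top M_k)^2\,\ind\{\lvert\alpha^\top M_k\rvert > c\}\mid\mathcal{F}_{k-1}]$. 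For every $n \ge (c/\varepsilon)^2$ the inclusion $\{\lvert\alpha^\top M_k\rvert > \sqrt{n}\varepsilon\}\subseteq\{\lvert\alpha^\top M_k\rvert > c\}$ holds pointwise, so $0 \le g_{n,k} \le h_k^{(c)}$; the sequence $\{h_k^{(c)}\}$ is again stationary, ergodic and integrable, so Birkhoff gives $n^{-1}\sum_{k=1}^n h_k^{(c)} \rightarrow \Ex[(\alpha^\top M_1)^2\,\ind\{\lvert\alpha^\top M_1\rvert > c\}]$ almost surely, whence $\limsup_n n^{-1}\sum_{k=1}^n g_{n,k} \le \Ex[(\alpha^\top M_1)^2\,\ind\{\lvert\alpha^\top M_1\rvert > c\}]$ almost surely for every $c$. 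Sending $c \to \infty$, the right-hand side vanishes by dominated convergence (dominating function $(\alpha^\top M_1)^2$, which is integrable), while the left-hand side is free of $c$; hence $n^{-1}\sum_{k=1}^n g_{n,k} \rightarrow 0$ almost surely, a fortiori in probability, which is \ref{cond:4b}.

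I expect the main obstacle to be exactly this $n$-dependent truncation in \ref{cond:4b}, circumvented by interposing a fixed level $c$ and then letting $c \to \infty$; setting up the two-sided extension and transferring the martingale-difference property are routine, and the remainder is a direct appeal to Birkhoff's ergodic theorem and to dominated convergence, with no computation required.
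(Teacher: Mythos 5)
Your proposal is correct and follows essentially the same route as the paper's proof: condition \ref{cond:4a} via Birkhoff's ergodic theorem applied to the conditional second moments, and condition \ref{cond:4b} by dominating the $n$-dependent truncation at level $\sqrt{n}\varepsilon$ with a fixed truncation at level $c$, applying the ergodic theorem at that fixed level, and then sending $c \to \infty$. Your write-up is in fact slightly more careful than the paper's on two points it leaves implicit --- the passage to a two-sided stationary extension so that conditioning on the full past makes sense, and the identification $\mathcal{I}_0 = \Ex[M_1 M_1^\top]$ --- but these are refinements of, not departures from, the same argument.
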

\begin{proof}
  By stationarity, there exists a fixed measurable function $f: \R^\infty \rightarrow \R^\infty$ such that, for all $k \in \bbN_+$,
    \begin{equation*}
      \mathbb{E}[M_k M_k^\top|M_{k-1},M_{k-2},\dots] = f(M_{k-1}, M_{k-2}, \dots)
    \end{equation*}
    almost surely.
    Due to the ergodicity of $M_k$, the series $N_k = f(M_{k-1}, M_{k-2}, \dots)$ is also ergodic so that $\bar{N}_n \rightarrow_{a.s.} \Ex[N_1]$, \ie the condition \ref{cond:4a} holds true.
    Similarly, given $c > 0$,
    \begin{equation*}
      G_n(c) := \frac1n\sum_{k=1}^n \Ex\left[ (\alpha^\top M_k)^2 | \ind\set{\abs{\alpha^\top M_k} > c} | \mathcal{F}_{k-1} \right] \rightarrow_{a.s.} G(c)
    \end{equation*}
    for any $\alpha \in \R^d$, where $G(c) = \bbE[(\alpha^\top M_1)^2 | \ind\set{\abs{\alpha^\top M_1} > c}]$ can be arbitrarily small by setting $c$ to be large.
    Hence, for any $\delta > 0$ and any $\alpha \in \R^d$, there exists a constant $c_0$ and an integer $N > 0$ such that $\forall n > N$, we have $G_n(c_0) < \delta$ almost surely.
    To verify the condition \ref{cond:4b}, note that $G_n(c)$ is decreasing in $c$, so,
    for every $\varepsilon > 0$, there exists $M > 0$ such that $n > M$ implies
    \begin{equation*}
      \frac1n \sum_{k=1}^n \Ex\left[ \alpha^\top M_k^2 | \ind\set{\abs{\alpha^\top M_k} > \varepsilon \sqrt{n}} | \mathcal{F}_{k-1} \right] \le G_n(c_0) < \delta
    \end{equation*}
    almost surely.
    As $\delta$ is arbitrary, we know that the condition \ref{cond:4b} holds.
\end{proof}

\begin{remark}
  Under suitable regularity conditions, \Cref{asmp:null} holds true for \emph{i.i.d.}~models, hidden Markov models \cite[Chapter 12]{bickel1998asymptotic}, and stationary autoregressive moving-average models \cite[Chapter 13]{douc2014nonlinear}.
\end{remark}

\begin{proposition}[Null hypothesis]\label{prop:null}
  Under \Cref{asmp:null}, we have, for any $\tau_n \in \bbN_+$ such that $\tau_n/n \rightarrow \lambda \in (0, 1)$,
  \begin{align*}
    \sqrt{\frac{n}{\tau_n(n - \tau_n)}} S_{\post{\tau_n+1}}(\htheta_n)
    \rightarrow_d
    \mathcal{N}(0, \mathcal{I}_0)\enspace.
  \end{align*}
  In particular,
  \begin{align*}
    R_n(\tau_n) &= S_{\post{\tau_n+1}}(\htheta_n)^\top \calI_n(\htheta_n; \tau_n)^{-1} S_{\post{\tau_n+1}}(\htheta_n) \rightarrow_d \chi_d^2 \\
    R_n(\tau_n, T) &= [S_{\tau_n+1:n}(\htheta_n)]_{T}^\top [\calI_{n}(\htheta_n; \tau_n)]_{T, T}^{-1} [S_{\tau_n+1:n}(\htheta_n)]_{T} \rightarrow_d \chi_{\abs{T}}^2, \quad \mbox{for any } T \subset [d] \enspace.
  \end{align*}
\end{proposition}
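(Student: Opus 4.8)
The plan is to reduce $R_n(\tau_n)$ to the behaviour of the raw score at $\theta_0$, show the two score increments on either side of $\tau_n$ converge jointly to independent Gaussians by a martingale CLT, observe that plugging in $\htheta_n$ turns the relevant score into a Brownian-bridge increment with covariance $\lambda(1-\lambda)\mathcal{I}_0$, and note that the partial observed information carries exactly the matching normalization, so the quadratic form becomes pivotal. \emph{Step 1 (linearization).} I would first run the classical argument behind \ref{cond:3}: expanding the estimating equation $S_{1:n}(\htheta_n)=0$ about $\theta_0$, using $\htheta_n\rightarrow_p\theta_0$ and the uniform third-derivative bound \ref{cond:1}, and invoking \ref{cond:2}, gives $\htheta_n-\theta_0 = \calI_{1:n}(\theta_0)^{-1}S_{1:n}(\theta_0) + o_p(n^{-1/2})$ with $\calI_{1:n}(\theta_0)/n\rightarrow_p\mathcal{I}_0$. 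A second-order Taylor expansion of $S_{\tau_n+1:n}$ about $\theta_0$, the remainder again controlled by \ref{cond:1} and $\lVert\htheta_n-\theta_0\rVert=O_p(n^{-1/2})$, gives $S_{\tau_n+1:n}(\htheta_n) = S_{\tau_n+1:n}(\theta_0) - \calI_{\tau_n+1:n}(\theta_0)(\htheta_n-\theta_0) + o_p(n^{1/2})$. Since $\calI_{1:\tau_n}(\theta_0)/\tau_n\rightarrow_p\mathcal{I}_0$ — this is \ref{cond:2} read along the subsequence of length $\tau_n$, as $\ell_{1:\tau_n}$ is the full log-likelihood of $\obs_1,\dots,\obs_{\tau_n}$ — and $\calI_{\tau_n+1:n}=\calI_{1:n}-\calI_{1:\tau_n}$, one gets $\calI_{1:\tau_n}(\theta_0)/n\rightarrow_p\lambda\mathcal{I}_0$ and $\calI_{\tau_n+1:n}(\theta_0)/n\rightarrow_p(1-\lambda)\mathcal{I}_0$, hence $\calI_{\tau_n+1:n}(\theta_0)\calI_{1:n}(\theta_0)^{-1}\rightarrow_p(1-\lambda)I_d$; substituting, using $S_{1:n}=S_{1:\tau_n}+S_{\tau_n+1:n}$, and collecting the $o_p(\sqrt n)$ terms yields \[ \frac{1}{\sqrt n}S_{\tau_n+1:n}(\htheta_n) = \lambda\,\frac{1}{\sqrt n}S_{\tau_n+1:n}(\theta_0) - (1-\lambda)\,\frac{1}{\sqrt n}S_{1:\tau_n}(\theta_0) + o_p(1). \]

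\emph{Step 2 (martingale CLT).} Next I would apply the multivariate martingale central limit theorem to the $2d$-dimensional array $\tilde M_k := \big(M_k\,\mathbf{1}\{k\le\tau_n\};\ M_k\,\mathbf{1}\{k>\tau_n\}\big)$, a martingale difference sequence \wrt $\{\mathcal{F}_t\}$ since $\tau_n$ is deterministic. Its normalized conditional covariance $n^{-1}\sum_{k=1}^n\Ex[\tilde M_k\tilde M_k^\top\mid\mathcal{F}_{k-1}]$ is block diagonal with identically zero off-diagonal block and, by \ref{cond:4a} used with index $n$ and with index $\tau_n$, converges to $\mathrm{diag}(\lambda\mathcal{I}_0,(1-\lambda)\mathcal{I}_0)$; the Lindeberg condition needed is exactly \ref{cond:4b} summed over coordinates. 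Together with the $o_p(1)$ approximation in \ref{cond:4}, this gives \[ \Big(\frac{1}{\sqrt n}S_{1:\tau_n}(\theta_0),\ \frac{1}{\sqrt n}S_{\tau_n+1:n}(\theta_0)\Big) \rightarrow_d (U_1,U_2),\quad U_1\perp U_2,\ U_1\sim\mathcal{N}(0,\lambda\mathcal{I}_0),\ U_2\sim\mathcal{N}(0,(1-\lambda)\mathcal{I}_0). \] Combining with Step 1 by continuous mapping, $\frac{1}{\sqrt n}S_{\tau_n+1:n}(\htheta_n)\rightarrow_d V:=\lambda U_2-(1-\lambda)U_1$, which is centered Gaussian with covariance $\lambda^2(1-\lambda)\mathcal{I}_0+(1-\lambda)^2\lambda\mathcal{I}_0 = \lambda(1-\lambda)\mathcal{I}_0$ (equivalently, $\mathcal{I}_0^{1/2}$ times a Brownian bridge at time $\lambda$). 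Since $n^2/(\tau_n(n-\tau_n))\to 1/(\lambda(1-\lambda))$, multiplying by $\sqrt{n/(\tau_n(n-\tau_n))}$ rescales $V$ to $\mathcal{N}(0,\mathcal{I}_0)$, which is the first assertion.

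\emph{Step 3 (quadratic forms).} Finally I would expand the Schur complement \eqref{eq:schur_complement}: writing $A_n:=\calI_{1:\tau_n}(\htheta_n)$ and $C_n:=\calI_{\tau_n+1:n}(\htheta_n)$, so $A_n+C_n=\calI_{1:n}(\htheta_n)$, the partial observed information is $\calI_n(\htheta_n;\tau_n) = C_n - C_n(A_n+C_n)^{-1}C_n$. Using \ref{cond:1} to replace $\htheta_n$ by $\theta_0$ up to $o_p(n)$, so that $A_n/n\rightarrow_p\lambda\mathcal{I}_0$ and $C_n/n\rightarrow_p(1-\lambda)\mathcal{I}_0$, the continuous mapping theorem gives $\calI_n(\htheta_n;\tau_n)/n\rightarrow_p(1-\lambda)\mathcal{I}_0-(1-\lambda)^2\mathcal{I}_0 = \lambda(1-\lambda)\mathcal{I}_0$. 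Slutsky's theorem and continuous mapping then yield \[ R_n(\tau_n)\rightarrow_d V^\top\big(\lambda(1-\lambda)\mathcal{I}_0\big)^{-1}V, \] and writing $V=\sqrt{\lambda(1-\lambda)}\,\mathcal{I}_0^{1/2}Z$ with $Z\sim\mathcal{N}(0,I_d)$ identifies this limit as $Z^\top Z\sim\chi_d^2$. For $R_n(\tau_n,T)$ one restricts every vector and matrix to the coordinates in $T$: then $\frac{1}{\sqrt n}[S_{\tau_n+1:n}(\htheta_n)]_T\rightarrow_d V_T\sim\mathcal{N}(0,\lambda(1-\lambda)[\mathcal{I}_0]_{T,T})$ and $\frac1n[\calI_n(\htheta_n;\tau_n)]_{T,T}\rightarrow_p\lambda(1-\lambda)[\mathcal{I}_0]_{T,T}$, and the same computation gives $R_n(\tau_n,T)\rightarrow_d\chi_{\abs{T}}^2$.

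I expect the main obstacle to be Step 2, together with the bookkeeping that makes the limits in Steps 1 and 2 dovetail: one must package both score increments inside a single martingale array so that their asymptotic independence and the precise covariance $\lambda(1-\lambda)\mathcal{I}_0$ — the very matrix that reappears as the limit of the normalized partial observed information — come out of one CLT, since it is exactly this cancellation that makes $R_n(\tau_n)$ asymptotically pivotal. A secondary difficulty is the remainder control in Step 1: showing the Taylor remainders are genuinely $o_p(n^{1/2})$, which is precisely what the $O_p(1)$ bound on $n^{-1}\nabla^3_\theta\ell_n$ in \ref{cond:1} buys, and checking that \ref{cond:2} and \ref{cond:4a} transfer verbatim to the length-$\tau_n$ subsequence, using $\tau_n\to\infty$ and additivity of the score and information along $\{\mathcal{F}_t\}$.
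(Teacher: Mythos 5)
Your proposal is correct and follows essentially the same route as the paper: linearize $S_{\tau_n+1:n}(\htheta_n)$ around $\theta_0$ via the asymptotic expansion of the MLE, establish joint asymptotic normality of the pre- and post-change score increments by a martingale CLT with Cramér--Wold (the paper's Lemma~\ref{lemma:sum_of_two_scores}, stated for the pair $(Z_{\tau_n},Z_n)$ rather than your two disjoint increments, which is a linear reparametrization of the same limit), and match the resulting covariance $\lambda(1-\lambda)\mathcal{I}_0$ with the probability limit of the normalized partial observed information (the paper's Lemma~\ref{lem:null_info}). The only differences are the choice of normalization ($\sqrt{n}$ versus $\sqrt{n-\tau_n}$) and of coordinates in the joint CLT, neither of which changes the substance of the argument.
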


We start by showing that the partial observed information $\calI_n(\est_n; \tau_n)$ defined in \eqref{eq:schur_complement} is a consistent estimator of $\calI_0$ with proper normalization.

\begin{lemma}\label{lem:null_info}
  Under assumptions \ref{cond:1}-\ref{cond:3}, we have, for any $\tau_n \in \bbN_+$ such that $\tau_n/n \rightarrow \lambda \in (0, 1)$,
  \begin{align*}
    \frac{n}{\tau_n(n - \tau_n)} \calI_{n}(\htheta_n; \tau_n)
     \rightarrow_p \mathcal{I}_0 \enspace.
  \end{align*}
\end{lemma}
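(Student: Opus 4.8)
The plan is to show that, after dividing by $n$, each of the observed-information blocks appearing in the Schur complement \eqref{eq:schur_complement} converges in probability to a multiple of $\mathcal I_0$, and then to pass to the limit using Slutsky's lemma together with the continuity of matrix inversion. Throughout, information matrices are understood to be evaluated at $\htheta_n$ unless stated otherwise.

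First I would record the consistency of the MLE: by \ref{cond:3}, $\sqrt n(\htheta_n-\theta_0)=O_p(1)$, hence $\htheta_n-\theta_0=o_p(1)$ and, since $\Theta_0$ is an open neighborhood of $\theta_0$, $\htheta_n\in\Theta_0$ with probability tending to one. Next I would upgrade \ref{cond:2} from $\theta_0$ to $\htheta_n$: writing $\calI_{1:n}(\theta)=-\nabla^2_\theta\ell_n(\theta)$ and applying the mean value theorem entrywise along the segment from $\theta_0$ to $\htheta_n$, there is $\bar\theta_n$ on this segment with
\[
  \Bigl\lVert \tfrac1n\calI_{1:n}(\htheta_n)-\tfrac1n\calI_{1:n}(\theta_0)\Bigr\rVert
  \le \Bigl\lVert \tfrac1n\nabla^3_\theta\ell_n(\bar\theta_n)\Bigr\rVert\,\lVert\htheta_n-\theta_0\rVert
  \le M(W_1,\dots,W_n)\,\lVert\htheta_n-\theta_0\rVert = O_p(1)\cdot o_p(1)=o_p(1),
\]
using \ref{cond:1}. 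Combined with \ref{cond:2} this gives $\tfrac1n\calI_{1:n}(\htheta_n)\rightarrow_p\mathcal I_0$.

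Running the identical argument on the first $\tau_n$ observations gives $\tfrac1{\tau_n}\calI_{1:\tau_n}(\htheta_n)\rightarrow_p\mathcal I_0$; since $\tau_n/n\to\lambda$ this yields $\tfrac1n\calI_{1:\tau_n}(\htheta_n)\rightarrow_p\lambda\mathcal I_0$, and, by additivity of the log-likelihood, $\calI_{\tau_n+1:n}=\calI_{1:n}-\calI_{1:\tau_n}$, so $\tfrac1n\calI_{\tau_n+1:n}(\htheta_n)\rightarrow_p(1-\lambda)\mathcal I_0$. Because $\mathcal I_0$ is positive definite, $\tfrac1n\calI_{1:n}(\htheta_n)$ is eventually invertible and its inverse converges in probability to $\mathcal I_0^{-1}$ by the continuous mapping theorem, so substituting into \eqref{eq:schur_complement} and using that the inner $n$ from the inverse cancels one factor of $1/n$,
\[
  \tfrac1n\calI_n(\htheta_n;\tau_n)
  = \tfrac1n\calI_{\tau_n+1:n}-\bigl(\tfrac1n\calI_{\tau_n+1:n}\bigr)\bigl(\tfrac1n\calI_{1:n}\bigr)^{-1}\bigl(\tfrac1n\calI_{\tau_n+1:n}\bigr)
  \rightarrow_p (1-\lambda)\mathcal I_0-(1-\lambda)^2\mathcal I_0 = \lambda(1-\lambda)\mathcal I_0 .
\]
Finally, multiplying through by $n^2/[\tau_n(n-\tau_n)]\to 1/[\lambda(1-\lambda)]$ and applying Slutsky's lemma delivers $\tfrac{n}{\tau_n(n-\tau_n)}\calI_n(\htheta_n;\tau_n)\rightarrow_p\mathcal I_0$, as claimed.

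The one point that needs care is the transfer of \ref{cond:1}--\ref{cond:2} from the whole sample to the initial block $W_1,\dots,W_{\tau_n}$: for i.i.d.\ or stationary ergodic data this is automatic, since partial averages obey the same law of large numbers, but in the abstract formulation it should be stated explicitly (equivalently, one reads \ref{cond:1}--\ref{cond:2} as holding along every sample size tending to infinity, with $M(W_1,\dots,W_m)$ tight in $m$, which is legitimate because $\tau_n\to\infty$). Everything else is a routine combination of the mean value theorem, Slutsky's lemma, and continuity of $A\mapsto A^{-1}$.
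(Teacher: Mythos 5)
Your proof is correct and follows essentially the same route as the paper: a Taylor/mean-value expansion controlled by the third-derivative bound of \ref{cond:1} to transfer \ref{cond:2} from $\theta_0$ to $\hat\theta_n$, additivity of the observed information to handle the block $\mathcal{I}_{\tau_n+1:n}$, and continuity of matrix inversion applied to the Schur complement, with only the cosmetic difference that you normalize every block by $1/n$ and rescale at the end. Your closing remark---that the convergence of $\tau_n^{-1}\mathcal{I}_{1:\tau_n}(\theta_0)$ to $\mathcal{I}_0$ for the initial block is an additional reading of \ref{cond:1}--\ref{cond:2} rather than a literal consequence of them---identifies a step the paper's proof also uses implicitly without comment, so flagging it is a point in your favor rather than a gap.
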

\begin{proof}
  According to Assumption \ref{cond:1} and Taylor's theorem, we obtain
  \begin{align*}
    \frac1n \norm{\nabla_\theta^2 \ell_n(\htheta_n) - \nabla_\theta^2 \ell_n(\theta_0)} \le \frac1n \norm{\nabla_\theta^3 \ell_n(\bar \theta_n)} \norm{\htheta_n - \theta_0} \enspace,
  \end{align*}
  where $\norm{\bar \theta_n - \theta_0} \le \norm{\htheta_n - \theta_0}$.
  Let $E_n$ be the event $\{\htheta_n \in \Theta_0\}$.
  By Assumption \ref{cond:3}, it holds that $\mathbb{P}(E_n) \rightarrow 0$, and thus
  \begin{align*}
    \frac1n \norm{\nabla_\theta^2 \ell_n(\htheta_n) - \nabla_\theta^2 \ell_n(\theta_0)} \le \norm{M(W_1, \dots, W_n)} \norm{\htheta_n - \theta_0} + o_p(1) = o_p(1) \enspace.
  \end{align*}
  Consequently, by the triangle inequality, we get
  \begin{align*}
    \norm{-\frac1n \nabla_\theta^2 \ell_n(\theta_n) - \calI_0} \le -\frac1n \norm{\nabla_\theta^2 \ell_n(\theta_n) - \nabla_\theta^2 \ell_n(\theta_0)} + \norm{-\frac1n \nabla_\theta^2 \ell_n(\theta_0) - \calI_0} \rightarrow_p 0 \enspace.
  \end{align*}
  This yields $-\nabla_\theta^2 \ell_n(\est_n)/n \rightarrow_p \calI_0$.
  It follows that
  \begin{align*}
    \frac1{n-\tau_n} \calI_{\tau_n+1:n}(\hat \theta_n)
    &= -\frac{1}{n - \tau_n} \nabla_{\theta}^2 \ell_{\post{\tau_n+1}}(\htheta_n)
    = -\frac{1}{n - \tau_n} [\nabla_{\theta}^2 \ell_{1:n}(\htheta_n) - \nabla_{\theta}^2 \ell_{1:\tau_n}(\htheta_n)] \\
    &\rightarrow_p \frac{1}{1 - \lambda} \mathcal{I}_0 - \frac{\lambda}{1-\lambda} \mathcal{I}_0 = \mathcal{I}_0 \enspace.
  \end{align*}
  Recall that $\info_n(\htheta_n; \tau_n) = \info_{\post{\tau_n+1}}(\est_n) - \info_{\post{\tau_n+1}}(\est_n)^\top \info_{\post{1}}(\est_n)^{-1} \info_{\post{\tau_n+1}}(\est_n)$, we can derive
  \begin{equation*}
    \frac{n}{\tau_n(n - \tau_n)}\info_n(\htheta_n; \tau_n) \rightarrow_p \frac1\lambda \mathcal{I}_0 - \left(\frac1\lambda - 1\right) \mathcal{I}_0 = \mathcal{I}_0 \enspace.
  \end{equation*}
\end{proof}

To derive the asymptotic distribution of the score $S_{\post{\tau_n+1}}(\htheta_n)$, we will express it as a linear combination of the normalized scores $Z_{\tau_n}(\theta_0) := S_{1:\tau_n}(\theta_0)/\sqrt{\tau_n}$ and $Z_n(\theta_0) := S_{1:n}(\theta_0)/\sqrt{n}$, and then prove its asymptotic normality by the following lemma.
\begin{lemma}\label{lemma:sum_of_two_scores}
  Under Assumption \ref{cond:4}, we have, for every sequence $\tau_n \in \Z_+$ such that $\tau_n / n \rightarrow \lambda \in (0, 1)$,
  \begin{align}\label{eq:biscore_asymp}
    \begin{pmatrix}
    Z_{\tau_n} \sqrt{\tau_n/n} \\
    Z_n
    \end{pmatrix} \rightarrow_d
    \mathcal{N}\left( 0,
      \begin{pmatrix}
        \lambda\mathcal{I}_0 & \lambda\mathcal{I}_0 \\
        \lambda\mathcal{I}_0 & \mathcal{I}_0
        \end{pmatrix}
    \right) \enspace.
  \end{align}
  Moreover, if $\sqrt{n}(\est_n - \theta_0) = \calI_0^{-1} Z_n(\tpar) + o_p(1)$, then
  \begin{align*}
    \sqrt{n}
    \begin{pmatrix}
    \htheta_{\tau_n} - \theta_0 \\
    \htheta_{n} - \theta_0
    \end{pmatrix}
    \rightarrow_d
    \mathcal{N} \left( 0,
      \begin{pmatrix}
        \lambda^{-1}\mathcal{I}_0^{-1} & \mathcal{I}_0^{-1} \\
        \mathcal{I}_0^{-1} & \mathcal{I}_0^{-1}
        \end{pmatrix}
    \right)\enspace.
  \end{align*}
\end{lemma}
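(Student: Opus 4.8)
The plan is to reduce both displays to the martingale-difference structure granted by Assumption~\ref{cond:4}, applied at the two sample sizes $\tau_n$ and $n$ with the same martingale differences $\{M_k\}$. Since $Z_{\tau_n}(\tpar)=\tau_n^{-1/2}\sum_{k=1}^{\tau_n}M_k+o_p(1)$ and $Z_n(\tpar)=n^{-1/2}\sum_{k=1}^{n}M_k+o_p(1)$, rescaling the first identity by $\sqrt{\tau_n/n}\to\sqrt\lambda$ gives
\[
  Z_{\tau_n}(\tpar)\sqrt{\tau_n/n}=\frac1{\sqrt n}\sum_{k=1}^{\tau_n}M_k+o_p(1),\qquad
  Z_n(\tpar)=\frac1{\sqrt n}\sum_{k=1}^{n}M_k+o_p(1)\enspace.
\]
By Slutsky it suffices to establish the joint normality of the two leading sums, and by the Cramér--Wold device it suffices to treat, for each fixed $a,b\in\bbR^d$, the scalar sum $a^\top(\sqrt n)^{-1}\sum_{k\le\tau_n}M_k+b^\top(\sqrt n)^{-1}\sum_{k\le n}M_k=(\sqrt n)^{-1}\sum_{k=1}^{n}\xi_{n,k}$, where $\xi_{n,k}:=(a+b)^\top M_k$ for $k\le\tau_n$ and $\xi_{n,k}:=b^\top M_k$ for $\tau_n<k\le n$. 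Since each $M_k$ is an $\mathcal{F}_k$-martingale difference, so is the triangular array $\{\xi_{n,k}\}_{k=1}^n$.

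Next I would invoke the Lindeberg--Feller martingale central limit theorem for $(\sqrt n)^{-1}\sum_{k=1}^n\xi_{n,k}$. For the conditional variance, split the sum at $\tau_n$ and write $n^{-1}\sum_{k=1}^{\tau_n}(\cdot)=(\tau_n/n)\,\tau_n^{-1}\sum_{k=1}^{\tau_n}(\cdot)$; condition~\ref{cond:4a}, applied at the deterministic index $\tau_n\to\infty$ and at $n$ (subtracting the two to handle the block $\tau_n<k\le n$) together with $\tau_n/n\to\lambda$, yields
\[
  \frac1n\sum_{k=1}^{n}\Ex[\xi_{n,k}^2\mid\mathcal{F}_{k-1}]\rightarrow_p \lambda(a+b)^\top\mathcal{I}_0(a+b)+(1-\lambda)\,b^\top\mathcal{I}_0 b=\lambda\,a^\top\mathcal{I}_0 a+2\lambda\,a^\top\mathcal{I}_0 b+b^\top\mathcal{I}_0 b\enspace,
\]
which is exactly $\mathrm{Var}(a^\top U+b^\top V)$ when $(U,V)$ is centered Gaussian with covariance $\left(\begin{smallmatrix}\lambda\mathcal{I}_0&\lambda\mathcal{I}_0\\\lambda\mathcal{I}_0&\mathcal{I}_0\end{smallmatrix}\right)$. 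For the conditional Lindeberg condition, bound $\xi_{n,k}^2\ind\{\abs{\xi_{n,k}}>\varepsilon\sqrt n\}\le((a+b)^\top M_k)^2\ind\{\abs{(a+b)^\top M_k}>\varepsilon\sqrt n\}+(b^\top M_k)^2\ind\{\abs{b^\top M_k}>\varepsilon\sqrt n\}$ and apply condition~\ref{cond:4b} with $\alpha=a+b$ and with $\alpha=b$. Cramér--Wold and Slutsky then give \eqref{eq:biscore_asymp}.

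For the second statement, the assumed asymptotic linearity, being a property of the estimator sequence, also holds at sample size $\tau_n$, so $\sqrt{\tau_n}(\htheta_{\tau_n}-\tpar)=\mathcal{I}_0^{-1}Z_{\tau_n}(\tpar)+o_p(1)$, hence $\sqrt n(\htheta_{\tau_n}-\tpar)=(n/\tau_n)\,\mathcal{I}_0^{-1}\big(Z_{\tau_n}(\tpar)\sqrt{\tau_n/n}\big)+o_p(1)$. Stacking this with $\sqrt n(\htheta_n-\tpar)=\mathcal{I}_0^{-1}Z_n(\tpar)+o_p(1)$ and using $n/\tau_n\to\lambda^{-1}$, the pair equals, up to $o_p(1)$, the image of the vector in \eqref{eq:biscore_asymp} under the block-diagonal map $\mathrm{diag}(\lambda^{-1}\mathcal{I}_0^{-1},\mathcal{I}_0^{-1})$; the continuous mapping theorem and the identity $\mathrm{diag}(\lambda^{-1}\mathcal{I}_0^{-1},\mathcal{I}_0^{-1})\left(\begin{smallmatrix}\lambda\mathcal{I}_0&\lambda\mathcal{I}_0\\\lambda\mathcal{I}_0&\mathcal{I}_0\end{smallmatrix}\right)\mathrm{diag}(\lambda^{-1}\mathcal{I}_0^{-1},\mathcal{I}_0^{-1})=\left(\begin{smallmatrix}\lambda^{-1}\mathcal{I}_0^{-1}&\mathcal{I}_0^{-1}\\\mathcal{I}_0^{-1}&\mathcal{I}_0^{-1}\end{smallmatrix}\right)$ deliver the claim. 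The main obstacle is the first half: packaging the pre- and post-$\tau_n$ partial sums into a single martingale-difference array and verifying that conditions~\ref{cond:4a}--\ref{cond:4b}, stated along the full index $n$, transfer to the deterministic subsequence $\tau_n$ and combine additively across the two blocks to the correct block-structured limit; everything afterwards is routine bookkeeping with Slutsky, Cramér--Wold, and the continuous mapping theorem.
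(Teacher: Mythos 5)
Your proposal is correct and follows essentially the same route as the paper's proof: a Cramér--Wold reduction, the triangular martingale-difference array split at $\tau_n$, verification of the conditional variance and Lindeberg conditions via \ref{cond:4a}--\ref{cond:4b}, and the block-diagonal linear map $\mathrm{diag}(\lambda^{-1}\mathcal{I}_0^{-1},\mathcal{I}_0^{-1})$ for the second display. The one point you flag as the "main obstacle" --- that \ref{cond:4a} must also hold along the deterministic subsequence $\tau_n$ --- is used implicitly in the paper's proof as well, so your treatment is, if anything, slightly more careful.
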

\begin{proof}
  According to Cram\'er-Wold device, it is sufficient to show that for any $(a^\top, b^\top) \in \R^{2d}$,
  \begin{align*}
    a^\top \sqrt{\frac{\tau_n}{n}} Z_{\tau_n} + b^\top Z_n \rightarrow_d \mathcal{N}\left( 0, \lambda(a+b)^\top \mathcal{I}_0 (a+b) + (1-\lambda)b^\top \mathcal{I}_0 b \right), \quad \text{as } n \rightarrow \infty \enspace.
  \end{align*}
  We will prove this by the Lindeberg theorem for martingales.
  In fact,
  \begin{align*}
    a^\top \sqrt{\frac{\tau_n}{n}} Z_{\tau_n} + b^\top Z_n
    & = \sum_{k=1}^{\tau_n} (a+b)^\top \frac{M_k}{\sqrt{n}}
      + \sum_{k=\tau_n + 1}^n b^\top \frac{M_k}{\sqrt{n}} \enspace.
  \end{align*}
  Let $\obs_{n,k} = (a+b)^\top M_k$, if $k \in [\tau_n]$; and $\obs_{n,k} = b^\top M_k$, if $k \in \{\tau_n + 1, \dots, n\}$.
  Then $\{\obs_{n,k}, \mathcal{F}_k\}_{k\in\Z}$ is also a martingale difference sequence.
  Additionally,
  \begin{align*}
    \frac1n\sum_{k=1}^n \Ex[\obs_{n,k}^2 | \mathcal{F}_{k-1}] &= \frac1n \sum_{k=1}^{\tau_n} (a+b)^\top \bbE[M_k M_k^\top|\mathcal{F}_{k-1}] (a+b)
    + \frac1n \sum_{k = \tau_n + 1}^n b^\top \bbE[M_k M_k^\top|\mathcal{F}_{k-1}] b \\
    & = \frac{\tau_n}{n} \frac1{\tau_n} \sum_{k=1}^{\tau_n} a^\top \Ex[M_k M_k^\top|\mathcal{F}_{k-1}] (a+2b) + \frac1n \sum_{k = 1}^n b^\top \Ex[M_k M_k^\top|\mathcal{F}_{k-1}] b \\
    & \rightarrow_p \lambda a^\top \mathcal{I}_0 (a+2b) + b^\top \mathcal{I}_0 b
    = \lambda(a+b)^\top \mathcal{I}_0 (a+b) + (1-\lambda)b^\top \mathcal{I}_0 b\enspace,
\end{align*}
and, for any $\varepsilon > 0$,
\begin{align*}
  & \quad \frac1n \sum_{k=1}^n \Ex[ \obs_{n,k}^2 \ind{(|\obs_{n,k}| > \varepsilon\sqrt{n})}|\mathcal{F}_{k-1}] \\
  & = \frac1n \sum_{k=1}^{\tau_n} \Ex\left[ \left((a+b)^\top M_k \right)^2 \ind{(\lvert (a+b)^\top M_k \rvert > \varepsilon \sqrt{n}}) \bigg\vert \mathcal{F}_{k-1} \right] + \frac1n \sum_{k=\tau_n+1}^n \Ex\left[ \left( b^\top M_k \right)^2 \ind{(\lvert b^\top M_k \rvert > \varepsilon \sqrt{n})} \bigg\vert \mathcal{F}_{k-1} \right] \\
  & \rightarrow_p 0 \enspace,
\end{align*}
by Assumption \ref{cond:4b}.
Therefore, the statement \eqref{eq:biscore_asymp} holds by invoking the Lindeberg theorem for martingales.
Moreover,
\begin{align*}
  \sqrt{n}
  \begin{pmatrix}
    \htheta_{\tau_n} - \theta_0 \\
    \htheta_{n} - \theta_0
  \end{pmatrix}
  & =
  \begin{pmatrix}
    \mathcal{I}_0^{-1} \sqrt{\frac{n}{\tau_n}} Z_{\tau_n} + o_p(1) \\
    \mathcal{I}_0^{-1} Z_n + o_p(1)
  \end{pmatrix}
  =
    \begin{pmatrix}
      \mathcal{I}_0^{-1}/\lambda & 0 \\
      0 & \mathcal{I}_0^{-1}
    \end{pmatrix}
    \begin{pmatrix}
      \sqrt{\tau_n/n} Z_{\tau_n} \\
      Z_n
    \end{pmatrix} + o_p(1) \\
  &\rightarrow_d \mathcal{N}\left(0,
        \begin{pmatrix}
          \lambda^{-1}\mathcal{I}_0^{-1} & \mathcal{I}_0^{-1} \\
          \mathcal{I}_0^{-1} & \mathcal{I}_0^{-1}
        \end{pmatrix} \right)\enspace.
\end{align*}
\end{proof}

\begin{proof-of}{\Cref{prop:null}}
  Since $\est_n$ maximizes the log-likelihood function, it must satisfy the first order optimality condition, \ie $S_{1:n}(\est_n) = 0$.
  Then by Assumption \ref{cond:3} and Taylor expansion,
  \begin{align*}
    Z_n(\theta_0) = Z_n(\htheta_n) - \nabla_{\theta} Z_n(\theta_n^*)^\top (\htheta_n - \theta_0) = - \frac1{\sqrt{n}}\nabla_{\theta} Z_n(\theta_n^*)^\top \sqrt{n}(\htheta_n - \theta_0)\enspace,
  \end{align*}
  where $\theta_n^*$ is between $\theta_0$ and $\htheta_n$.
  It follows that $\theta_n^* \rightarrow_p \theta_0$ and
  \begin{align}\label{eq:convergence_info}
    -\frac{1}{\sqrt{n}} \nabla_{\theta} Z_n(\theta^*_n) = - \frac1n \nabla_\theta^2 \ell_n(\theta_n^*) = \mathcal{I}_0 + o_p(1)
  \end{align}
  by a similar argument as in \Cref{lem:null_info}.
  Note that $\sqrt{n}(\htheta_n - \theta_0) = O_p(1)$, we obtain
  \begin{equation}\label{eq:convergence_mle}
    \sqrt{n}(\htheta_n - \theta_0) = \mathcal{I}_0^{-1} Z_n(\theta_0) + o_p(1)\enspace.
  \end{equation}

  We then express the score $S_{\post{\tau_n+1}}$ as a linear combination of the normalized scores $Z_{\tau_n}(\theta_0)$ and $Z_n(\theta_0)$.
  By Lindeberg theorem for martingales~\cite[Chapter 4.5]{van2013time} and Cram\'er-Wold device~\cite{billingsley2008probability}, Assumption \ref{cond:4} implies $Z_n(\theta_0) \rightarrow_d \mathcal{N}(0, \mathcal{I}_0)$, and thus $Z_n(\theta_0) = O_p(1)$ as $n \rightarrow \infty$. It follows that
  \begin{align*}
  \label{eq:expasion_of_score}
    \frac{S_{\post{\tau_n+1}}(\htheta_n)}{\sqrt{n - \tau_n}} &= \frac{S_{\post{\tau_n+1}}(\theta_0)}{\sqrt{n - \tau_n}} + \frac1{\sqrt{n - \tau_n}}\nabla_{\theta} S_{\post{\tau_n+1}}^\top (\theta^*_n)(\htheta_n - \theta_0) \\
    & = \frac{S_{\post{\tau_n+1}}(\theta_0)}{\sqrt{n - \tau_n}} + \frac{\left( \nabla_{\theta} S_{1:n}(\theta^*_n) - \nabla_{\theta} S_{1:\tau_n}(\theta^*_n) \right)^\top}{\sqrt{n(n - \tau_n)}} \sqrt{n}(\htheta_n - \theta_0) \nonumber\\
    &= \frac{S_{\post{\tau_n+1}}(\theta_0)}{\sqrt{n - \tau_n}} + \left[\sqrt{\frac{n}{n - \tau_n}} \frac{Z_{n}(\theta_n^*)}{\sqrt{n}} - \frac{\tau_n}{\sqrt{n(n - \tau_n)}} \frac{Z_{\tau_n}(\theta_n^*)}{\sqrt{\tau_n}} \right]^\top (\calI_0^{-1} Z_n(\tpar) + o_p(1)), \quad \mbox{by } \eqref{eq:convergence_mle} \nonumber\\
    &= \frac{S_{\post{\tau_n+1}}(\theta_0)}{\sqrt{n - \tau_n}} + \left( \frac{\lambda}{\sqrt{1-\lambda}} - \sqrt{\frac{1}{1-\lambda}} \right) \mathcal{I}_0 \mathcal{I}_0^{-1} Z_n(\theta_0) + o_p(1), \quad \mbox{by } \eqref{eq:convergence_info} \nonumber \\
    & = -\sqrt{\frac{\tau_n}{n - \tau_n}} Z_{\tau_n}(\theta_0) + \sqrt{\frac{n}{n - \tau_n}}Z_n(\theta_0) + \frac{\lambda - 1}{\sqrt{1 - \lambda}} Z_n(\theta_0) + o_p(1) \nonumber\\
    & = - \frac{\sqrt{\lambda}}{\sqrt{1 - \lambda}}Z_{\tau_n}(\theta_0) + \frac{\lambda}{\sqrt{1 - \lambda}}Z_n(\theta_0) + o_p(1)\enspace.
  \end{align*}
  Now by Lemma~\ref{lemma:sum_of_two_scores}, we have
  \begin{equation}\label{eq:asn_score}
    \sqrt{\frac{n}{\tau_n(n - \tau_n)}} S_{\post{\tau_n+1}}(\htheta_n)
    \rightarrow_d
    \mathcal{N}\left( 0, \Big[\frac1\lambda \frac{\lambda}{1 - \lambda} - \frac2\lambda \frac{\lambda^2}{1 - \lambda} + \frac1\lambda \frac{\lambda^2}{1 - \lambda}\Big] \mathcal{I}_0 \right)
    =_d \mathcal{N}(0, \mathcal{I}_0) \enspace.
  \end{equation}
  Therefore, by \Cref{lem:null_info} and \eqref{eq:asn_score}, we have $R_n(\tau_n) \rightarrow_d \chi_d^2$ and $R_n(\tau_n, T) \rightarrow_d \chi_{\abs{T}}^2$.
\end{proof-of}

Note that the linear statistic is the maximum of $R_n(\tau)$ over $\tau \in [n-1]$, so we use the Bonferroni correction to compensate for multiple comparisons.
This gives the threshold $H_\text{lin}(\alpha) = q_{\chi_d^2}({\alpha}/{n})$---the upper $({\alpha}/{n})$-quantile of $\chi_d^2$.
Similarly, since the asymptotic distribution of $R_n(\tau, T)$ with $T \in \mathcal{T}_p$ is $\chi_p^2$ and $\abs{\mathcal{T}_p} = \binom{d}{p}$, the Bonferroni correction leads to the threshold $H_p(\alpha) = q_{\chi_p^2}(\alpha/[\binom{d}{p}n(p+1)^2])$,
where $(p+1)^2$ is required to guarantee an asymptotic $\alpha$ level.
In fact, we only need $\sum_{p \in \mathcal{P}} 1/(p+1)^2 < 1$ for controlling the level.
Other corrections are possible, but the former provides small thresholds when the change is sparse.

\begin{corollary}\label{cor:asymptotic_level}
  Under \Cref{asmp:null}, the three tests $\psi_{\text{auto}}, \psi_\lin, \psi_\scan$ are consistent in level with thresholds defined above.
\end{corollary}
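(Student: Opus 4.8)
The plan is a union bound over the scanned parameters, fed by the per-index null limit laws of \Cref{prop:null} together with the definition of the Bonferroni thresholds. For $\psi_\lin$, the rejection event is $\bigcup_{\tau=1}^{n-1}\{R_n(\tau)>H_\lin(\alpha)\}$, so $\prob(\psi_\lin(\alpha)=1\mid\hnull)\le\sum_{\tau=1}^{n-1}\prob(R_n(\tau)>H_\lin(\alpha)\mid\hnull)$; since $H_\lin(\alpha)=q_{\chi_d^2}(\alpha/n)$ is chosen so that $\prob(\chi_d^2>H_\lin(\alpha))=\alpha/n$, it suffices to show $\prob(R_n(\tau)>H_\lin(\alpha)\mid\hnull)\le(1+o(1))\,\alpha/n$ uniformly in $\tau$, which sums to $(1+o(1))\alpha$. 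For $\psi_\scan$, the rejection event $\{R_\scan(\alpha)>1\}$ is $\bigcup_{\tau}\bigcup_{p\le P}\bigcup_{T\in\mathcal{T}_p}\{R_n(\tau,T)>H_p(\alpha)\}$; the analogous uniform bound $\prob(R_n(\tau,T)>H_p(\alpha)\mid\hnull)\le(1+o(1))\,\alpha/[\binom{d}{p}\,n\,(p+1)^2]$ together with $|\mathcal{T}_p|=\binom{d}{p}$ collapses the triple sum to $(1+o(1))\,\alpha\sum_{p=1}^{P}(p+1)^{-2}$, which is strictly below $\alpha$ since $\sum_{p\ge1}(p+1)^{-2}=\pi^2/6-1<1$ (any correction with $\sum_p(p+1)^{-2}<1$ works, which is why the factor $(p+1)^2$ suffices). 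For $\psi_{\text{auto}}(\alpha)=\max\{\psi_\lin(\alpha_l),\psi_\scan(\alpha_s)\}$ with $\alpha_l+\alpha_s=\alpha$, subadditivity of probability and the two bounds above give $\limsup_n\prob(\psi_{\text{auto}}(\alpha)=1\mid\hnull)\le\alpha_l+\alpha_s=\alpha$.

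The one step that is not bookkeeping is the uniform-in-$\tau$ tail bound $\prob(R_n(\tau)>q_{\chi_d^2}(\alpha/n)\mid\hnull)\le(1+o(1))\alpha/n$ (and its analogue for $R_n(\tau,T)$). This does \emph{not} follow from \Cref{prop:null} verbatim: the threshold $q_{\chi_d^2}(\alpha/n)\asymp 2\log n$ diverges with $n$, so weak convergence of $R_n(\tau)$ to $\chi_d^2$ is not enough by itself, and \Cref{prop:null} only addresses sequences with $\tau_n/n\to\lambda\in(0,1)$. The cleanest way to close it is to control the whole statistic rather than each term. In the notation of the proof of \Cref{prop:null}, $R_n(\tau)$ equals, up to errors that are $o_p(1)$ uniformly in $\tau$, the quadratic form $\frac{n}{\tau(n-\tau)}\,G_n(\tau)^\top\mathcal{I}_0^{-1}G_n(\tau)$ in the martingale bridge $G_n(\tau)=\sum_{k\le\tau}M_k-\frac{\tau}{n}\sum_{k\le n}M_k$ (using $\frac{n}{\tau(n-\tau)}\calI_n(\htheta_n;\tau)\rightarrow_p\mathcal{I}_0$ from \Cref{lem:null_info}); a H\'ajek--R\'enyi maximal inequality and the law of the iterated logarithm for $\sum_{k\le\tau}M_k$, both available under Assumption \ref{cond:4}, then give $\max_{\tau\in[n-1]}R_n(\tau)=O_p(\log\log n)=o_p(H_\lin(\alpha))$. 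For the scan statistic, $\lambda_{\min}([A]_{T,T})\ge\lambda_{\min}(A)$ for principal submatrices gives $R_n(\tau,T)\le\kappa(A)\,R_n(\tau)$ with $A=\frac{n}{\tau(n-\tau)}\calI_n(\htheta_n;\tau)$, and since $\kappa(A)\rightarrow_p\kappa(\mathcal{I}_0)=O(1)$ while $\min_{p\le P}H_p(\alpha)\asymp\log n$, also $R_\scan(\alpha)=o_p(1)$. So all three false alarm rates in fact tend to $0$, a fortiori below $\alpha$; the Bonferroni thresholds are conservative, matching the empirical behavior reported in \cref{sec:experiments}.

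The main obstacle is precisely this boundary behavior: in the regime $\tau/n\to 0$ or $1$, \Cref{prop:null} is silent and a naive union bound over the $O(\epsilon n)$ extreme indices would overshoot, so one must instead invoke the maximal/LIL control above (or, to keep the strict term-by-term route, prove a uniform moderate-deviation refinement of \Cref{prop:null} valid up to thresholds of order $\log n$). Once that is in hand, everything else — the union bounds, the series estimate $\sum_{p\ge1}(p+1)^{-2}<1$, the condition-number comparison for the truncated statistics, and the subadditivity step for the auto-test — is routine.
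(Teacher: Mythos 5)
Your skeleton is the same as the paper's: a Bonferroni/union bound over $\tau$, then over $p\le P$ and $T\in\mathcal{T}_p$ with $\lvert\mathcal{T}_p\rvert=\binom{d}{p}$, the series bound $\sum_{p\ge1}(p+1)^{-2}<1$, and subadditivity to combine $\psi_\lin(\alpha_l)$ and $\psi_\scan(\alpha_s)$ into $\psi_{\text{auto}}(\alpha)$ with $\alpha_l+\alpha_s=\alpha$. Where you genuinely depart from the paper is in refusing to take the term-by-term step for granted: the paper simply writes $\prob_0\big(R_n(\tau)>q_{\chi_d^2}(\alpha/n)\big)\le \alpha/n+o(1)$ and sums over $\tau\in[n-1]$, and you are right that this is not literally delivered by \Cref{prop:null} — that proposition gives weak convergence to $\chi_d^2$ only along sequences with $\tau_n/n\to\lambda\in(0,1)$, at a fixed threshold, whereas here the threshold $q_{\chi_d^2}(\alpha/n)\asymp 2\log n$ diverges, the $o(1)$ errors must be summable over $n-1$ indices, and the boundary indices $\tau=o(n)$ are not covered at all. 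Your proposed repair (reduce $\max_\tau R_n(\tau)$ to a quadratic form in the martingale bridge via \Cref{lem:null_info}, then apply a H\'ajek--R\'enyi maximal inequality and an iterated-logarithm bound to get $\max_\tau R_n(\tau)=O_p(\log\log n)=o_p(H_\lin(\alpha))$, plus the eigenvalue-interlacing comparison for the truncated statistics) is a sensible and arguably more honest route, and it explains why the test is conservative in practice; but note that it is itself only a sketch — a law of the iterated logarithm for the partial sums $\sum_{k\le\tau}M_k$ does not follow from the Lindeberg conditions in Assumption \ref{cond:4} alone and would require something like the stationarity/ergodicity hypothesis of the auxiliary lemma or additional moment control, and the claimed uniformity in $\tau$ of the $o_p(1)$ remainders in the expansion of $S_{\tau+1:n}(\htheta_n)$ also needs a separate argument. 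So: same proof as the paper for the bookkeeping, a correctly identified gap in the paper's key inequality, and a plausible but not yet complete patch for it.
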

\begin{proof}
Let $\bbE_0$ and $\prob_0$ be the expectation and probability distribution under the null hypothesis.
We have
\begin{align*}
  \bbE_0[\psi_\text{lin}(\alpha)] = \bbP_0\left\{\max_{\tau \in [n-1]} R_n(\tau) > H_\lin(\alpha)\right\} \le \sum_{\tau=1}^{n-1} \prob_0(R_n(\tau) > q_{\chi_d^2}(\alpha/n)) \le \sum_{\tau=1}^{n-1} \frac{\alpha}{n} + o(1) = \alpha + o(1) \enspace,
\end{align*}
and
\begin{align*}
    \mathbb{E}_0[\psi_\text{scan}(\alpha)]
    &= \bbP_0\Big( \max_{\tau \in [n-1]} \max_{p \le P} \max_{T \in \mathcal{T}_p} H_p(\alpha)^{-1} R_n(\tau, T) > 1 \Big) \\
    & \le \sum_{\tau = 1}^{n-1} \sum_{p \le P} \sum_{T \in \mathcal{T}_p} \prob_0 \bigg( \frac{ R_n(\tau, T)}{q_{\chi_p^2}\big(\alpha/\big(\binom{d}{p}n(p+1)^2\big)\big)} > 1 \bigg) \\
    & \le \sum_{\tau = 1}^{n-1} \sum_{p \le P} \sum_{T \in \mathcal{T}_p} \frac{\alpha}{\binom{d}{p}n(p+1)^2} + o(1)
    < \sum_{p=1}^\infty \frac{\alpha}{(p+1)^2} + o(1)
    < \alpha + o(1) \enspace.
\end{align*}
For $\alpha = \alpha_l + \alpha_s$, the \emph{autograd-test} has false alarm rate
\begin{align*}
  \bbE_0[\psi(\alpha)] \le \bbE_0[\psi_\text{lin}(\alpha_l)] + \bbE_0[\psi_\text{scan}(\alpha_s)] \le \alpha_l + \alpha_s + o(1) = \alpha + o(1) \enspace.
\end{align*}
Therefore, the three proposed tests are all consistent in level.
\end{proof}

\subsection{Fixed alternative hypothesis}
Under fixed alternative hypothesis, we make the following assumptions.
\begin{assumption}\label{asmp:fixed}
  Let $\obs_1, \dots, \obs_n$ be an independent sample and $\{p_\theta: \theta \in \Theta \subset \bbR^d\}$ be a family of density functions.
  Suppose that there exists $\tau_n \in [n-1]$ such that $\obs_1, \dots,\obs_{\tau_n} \sim p_{\theta_0}$, $\obs_{\tau_n + 1},\dots,\obs_n \sim p_{\theta_1} (\theta_1 \neq \theta_0)$, and $\tau_n / n \rightarrow \lambda \in (0, 1)$.
  Moreover, suppose that the following assumptions hold:
  \begin{enumerate}[label=A'\arabic*,itemsep=1pt,parsep=0pt,topsep=0pt,partopsep=0pt]
    \item \label{cond:1p} : ~
      $F(\theta) := \lambda D_\text{KL}(p_{\theta_0} \Vert p_\theta) + \blambda D_\text{KL}(p_{\theta_1}\Vert p_\theta)$ has a minimizer $\theta^* \in int(\Theta)$, where $\blambda = 1 - \lambda$ and $D_\text{KL}$ is the KL-divergence.
    \item \label{cond:2p}: ~
    $\Theta$ contains an open neighborhood $\Theta^*$ of $\theta^*$ for which
    \begin{enumerate}[label=\ref*{cond:2p}-(\alph*)]
      \item : ~ $\ell(\theta) := \ell(\theta|x) := \log{p_\theta(x)}$ is twice continuously differentiable in $\theta$ almost surely.
      \item : ~ $\nabla_{ijk}^3 \ell(\theta|x)$ exists and satisfies $\abs{\nabla_{ijk}^3 \ell(\theta|x)} \le M_{ijk}(x)$ for $\theta \in \Theta^*$ and $i,j,k \in [d]$ almost surely with $\bbE_{\theta_l} M_{ijk}(\obs) < \infty$ for $l \in \{0,1\}$.
    \end{enumerate}
    \item \label{cond:3p}: ~ $\bbE_{\theta_l}[\nabla_{\theta} \ell(\theta^*)] = \nabla_{\theta} \bbE_{\theta_l}[\ell(\theta)]|_{\theta = \theta^*} = S_l^*$ for $l \in \{0, 1\}$.
    \item \label{cond:4p}: ~ $\bbE_{\theta_l}[-\nabla_{\theta}^2 \ell(\theta^*)] = \calI_l^*$ is positive definite for $l \in \{0, 1\}$.
  \end{enumerate}
\end{assumption}

\begin{proposition}[Fixed alternative hypothesis]\label{prop:fix_alternative}
  Under \Cref{asmp:fixed},
  there exists a sequence of MLE such that $\htheta_n \rightarrow_p \theta^*$ and, for any $\tau_n/n \rightarrow \lambda \in (0, 1)$,
  \begin{align}
    \tfrac{1}{n} R_n(\tau_n) \rightarrow_p (\blambda S_1^*)^\top (\calI^*)^{-1} (\blambda S_1^*)\enspace,
  \end{align}
  where $\calI^* = \blambda \calI_1^* - \blambda \calI_1^* \left(\lambda\calI_0^* + \blambda \calI_1^*\right)^{-1}\blambda \calI_1^*$ is a positive definite matrix.
\end{proposition}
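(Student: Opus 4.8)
The plan is to normalize by $n$, write $\tfrac1n R_n(\tau_n)$ as the quadratic form $\big(\tfrac1n S_{\tau_n+1:n}(\htheta_n)\big)^\top\big(\tfrac1n\calI_n(\htheta_n;\tau_n)\big)^{-1}\big(\tfrac1n S_{\tau_n+1:n}(\htheta_n)\big)$, identify the in-probability limits of the normalized score and of the normalized partial observed information once $\htheta_n$ is replaced by $\theta^*$, and conclude with the continuous mapping theorem. The one structural difference from \Cref{prop:null} is that the population limits are now governed by the two-segment mixture $F$ rather than by $\calI_0$.

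First I would prove consistency of the MLE. Splitting the log-likelihood over the two segments, $\tfrac1n\ell_{1:n}(\theta) = \tfrac{\tau_n}{n}\cdot\tfrac1{\tau_n}\sum_{k=1}^{\tau_n}\log p_\theta(\obs_k) + \tfrac{n-\tau_n}{n}\cdot\tfrac1{n-\tau_n}\sum_{k=\tau_n+1}^{n}\log p_\theta(\obs_k)$; each block is an average of i.i.d.\ terms, so the law of large numbers gives, pointwise on the neighborhood $\Theta^*$ of $\theta^*$, $\tfrac1n\ell_{1:n}(\theta)\rightarrow_p \lambda\bbE_{\theta_0}[\log p_\theta(\obs)] + \blambda\bbE_{\theta_1}[\log p_\theta(\obs)] = c - F(\theta)$ for a constant $c$ free of $\theta$, which is maximized at $\theta^*$ by \ref{cond:1p}. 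Differentiating under the expectation (justified by the domination in \ref{cond:2p}) gives $\nabla_\theta D_\text{KL}(p_{\theta_l}\Vert p_\theta) = -\bbE_{\theta_l}[\nabla_\theta\ell(\theta)]$ and $\nabla^2_\theta D_\text{KL}(p_{\theta_l}\Vert p_\theta)|_{\theta=\theta^*} = \calI_l^*$, hence $\nabla F(\theta^*) = -(\lambda S_0^* + \blambda S_1^*) = 0$ and $\nabla^2 F(\theta^*) = \lambda\calI_0^* + \blambda\calI_1^*\succ0$; in particular $\lambda S_0^* = -\blambda S_1^*$, consistently with the first-order condition $S_{1:n}(\htheta_n) = 0$. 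Upgrading the pointwise convergence to uniform convergence on a ball around $\theta^*$ via the third-derivative bound in \ref{cond:2p}, the map $\theta\mapsto\tfrac1n\ell_{1:n}(\theta)$ is strictly concave near $\theta^*$ with probability tending to one, and its unique local maximizer --- a sequence of MLEs $\htheta_n$ --- satisfies $\htheta_n\rightarrow_p\theta^*$, by the classical Cram\'er argument for a consistent root of the likelihood equation.

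Next, by the law of large numbers within each segment and \ref{cond:3p}--\ref{cond:4p}, I would record the pointwise limits $\tfrac1n S_{\tau_n+1:n}(\theta^*)\rightarrow_p\blambda S_1^*$, $\tfrac1n\calI_{\tau_n+1:n}(\theta^*)\rightarrow_p\blambda\calI_1^*$ and $\tfrac1n\calI_{1:n}(\theta^*)\rightarrow_p\lambda\calI_0^* + \blambda\calI_1^*$, and transfer them to $\htheta_n$ by a first-order Taylor expansion: $\tfrac1n S_{\tau_n+1:n}(\htheta_n) = \tfrac1n S_{\tau_n+1:n}(\theta^*) + \big(\tfrac1n\nabla_\theta S_{\tau_n+1:n}(\bar\theta_n)\big)(\htheta_n-\theta^*)$ with $\bar\theta_n$ on the segment $[\theta^*,\htheta_n]$; the Jacobian is $O_p(1)$ by the domination bound and the law of large numbers, and $\htheta_n-\theta^*=o_p(1)$, so $\tfrac1n S_{\tau_n+1:n}(\htheta_n)\rightarrow_p\blambda S_1^*$, and likewise (as in the proof of \Cref{lem:null_info}) $\tfrac1n\calI_{\tau_n+1:n}(\htheta_n)\rightarrow_p\blambda\calI_1^*$ and $\tfrac1n\calI_{1:n}(\htheta_n)\rightarrow_p\lambda\calI_0^* + \blambda\calI_1^*$. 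Substituting into the Schur-complement form \eqref{eq:schur_complement} gives $\tfrac1n\calI_n(\htheta_n;\tau_n)\rightarrow_p\calI^* = \blambda\calI_1^* - \blambda\calI_1^*(\lambda\calI_0^* + \blambda\calI_1^*)^{-1}\blambda\calI_1^*$, and the Woodbury identity rewrites this as $\calI^* = \big((\blambda\calI_1^*)^{-1} + (\lambda\calI_0^*)^{-1}\big)^{-1}$, which is positive definite as the inverse of a sum of positive definite matrices. Since matrix inversion is continuous on the positive definite cone, the continuous mapping theorem yields $\tfrac1n R_n(\tau_n)\rightarrow_p(\blambda S_1^*)^\top(\calI^*)^{-1}(\blambda S_1^*)$.

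I expect the main obstacle to be the two places where the data-dependent point $\htheta_n$ meets a law of large numbers for a non-identically-distributed sample: establishing MLE consistency under the two-segment design, and transferring the pointwise limits of the score and of the information from $\theta^*$ to $\htheta_n$. Both rest on uniform control of the relevant derivatives over a neighborhood of $\theta^*$, which is exactly what the third-derivative domination in \ref{cond:2p} provides; some care is also needed because ``a sequence of MLEs'' may have to be understood as a consistent root of the score equation rather than a global maximizer.
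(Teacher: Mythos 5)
Your proposal is correct and follows essentially the same route as the paper: a Cram\'er-type consistency argument for a root of the likelihood equation near $\theta^*$ (the paper carries this out via the boundary-of-ball expansion $D_1+D_2+D_3$ that your ``classical Cram\'er argument'' refers to), then a law of large numbers plus third-derivative domination to transfer the segmentwise limits of the score and information from $\theta^*$ to $\htheta_n$, and the continuous mapping theorem. Your explicit Woodbury rewriting $\calI^* = \big((\lambda\calI_0^*)^{-1}+(\blambda\calI_1^*)^{-1}\big)^{-1}$ is a slightly more detailed justification of positive definiteness than the paper gives, but it is not a different method.
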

\begin{proof}
  Among all solutions of the likelihood equation $\nabla_{\theta} \ell_n(\theta) = 0$, let $\est_n$ be the one that is closest to $\theta^\ast$ (this is possible since we are proving the existence).
  We firstly prove that $\htheta_n \rightarrow_p \theta^\ast$.
  For $\varepsilon > 0$ sufficiently small, let $B_\varepsilon = \{\theta \in \bbR^d: \norm{\theta - \theta^*} \le \varepsilon\} \subset \Theta^*$ and $\text{bd}(B_\varepsilon)$ be the boundary of $B_\varepsilon$.
  We will show that, for sufficiently small $\varepsilon$,
  \begin{equation}
  \label{eq:target}
    \bbP\left(\ell_n(\theta) < \ell_n(\theta^*), \forall \theta \in \text{bd}(B_\varepsilon)\right) \rightarrow 1 \enspace.
  \end{equation}
  This implies, with probability converging to one, $\ell_n(\theta)$ has a local maximum (also a solution to the likelihood equation) in $B_\varepsilon$, and thus $\htheta_n \in B_\varepsilon$.
  Consequently, $\bbP(\Vert \htheta_n - \theta^\ast \Vert > \varepsilon) \rightarrow 0$.

  To prove \eqref{eq:target}, we write, for any $\theta \in \text{bd}(B_\varepsilon)$, that
  \begin{align*}
    \frac{1}{n} [\ell_n(\theta) - \ell_n(\theta^*)]
    &= \frac{1}{n} (\theta - \theta^*)^\top \nabla_{\theta} \ell_n(\theta^*)
      - \frac{1}{2} (\theta - \theta^*)^\top \left(-\frac{1}{n} \nabla_{\theta}^2 \ell_n(\theta^*) \right) (\theta - \theta^*) \\
      &\quad + \frac{1}{6n} \sum_{i=1}^d \sum_{j=1}^d \sum_{k=1}^d (\theta_i - \theta_i^*)(\theta_j - \theta_j^*)(\theta_k - \theta_k^*) \nabla_{ijk}\ell_n(\bar{\theta}_n) \\
    &=: D_1 + D_2 + D_3\enspace,
  \end{align*}
  where $\bar{\theta}_n \in B_\varepsilon$ satisfies $\norm{\bar{\theta}_n - \theta^*} \le \norm{\theta - \theta^*}$.
  Let us bound $D_1$, $D_2$, and $D_3$ separately.
  Note that, by the law of large numbers,
  \begin{align*}
    D_1 \rightarrow_p &\ (\theta - \theta^*)^\top \left[\lambda \bbE_{\theta_0}[\nabla_{\theta} \ell(\theta^*)] + \blambda \bbE_{\theta_1}[\nabla_{\theta} \ell(\theta^*)]\right] \\
    = &\ (\theta - \theta^*)^\top \nabla_{\theta}
          \left[ \lambda \bbE_{\theta_0}[\ell(\theta)]
      + \blambda \bbE_{\theta_1}[\ell(\theta)]\right] \bigr\rvert_{\theta = \theta^*}, \quad \mbox{by Assumption \ref{cond:3p}} \\
    = &\ -(\theta - \theta^*)^\top \nabla_{\theta} \left[\lambda D_{KL}(p_{\theta_0} \Vert p_\theta) + \blambda D_{KL}(p_{\theta_1}\Vert p_\theta)\right] \bigr\rvert_{\theta = \theta^*} \\
    = &\ 0\enspace,
  \end{align*}
  where the last equality follows from Assumption \ref{cond:1p}.
  Moreover, by Assumption \ref{cond:4p},
  \begin{align*}
    D_2 \rightarrow_p
    - \frac{1}{2} (\theta - \theta^*)^\top
      \left(\lambda \calI_0^* + \blambda \calI_1^* \right)(\theta - \theta^*)
    \le - \frac{1}{2} \lambda_{\min} \varepsilon^2\enspace,
  \end{align*}
  where $\lambda_{\min}$ is the smallest eigenvalue of $\lambda \calI_0^* + \blambda \calI_1^*$.
  If we set $\varepsilon$ small enough such that $\text{bd}(B_\varepsilon) \subset \Theta^*$, then we have, by Assumption \ref{cond:2p},
  \begin{align*}
    \abs{D_3}
    & \le \frac{1}{6n}
          \sum_{ijk} \abs{\theta_i - \theta_i^*} \abs{\theta_j - \theta_j^*} \abs{\theta_k - \theta_k^*}
          \sum_{l=1}^n \abs{\nabla_{ijk} \ell(\bar{\theta}_n|\obs_l)}, \quad \mbox{by triangle inequality} \\
    & \le \frac{1}{6} \varepsilon^3
          \sum_{ijk} \frac{1}{n} \sum_{l=1}^n M_{ijk}(\obs_l), \quad \mbox{by } \abs{\theta_i - \theta_i^*} \le \norm{\theta - \theta^*} = \varepsilon \\
    & \rightarrow_p \frac{\varepsilon^3}{6}
          \sum_{ijk}\left(\lambda \bbE_{\theta_0}[M_{ijk}(\obs)]
          + \blambda \bbE_{\theta_1}[M_{ijk}(\obs)]\right)\enspace.
  \end{align*}

  Hence, for any given $\delta > 0$, any $\varepsilon > 0$ sufficiently small, any $n$ sufficiently large, with probability larger than $1 - \delta$, we have, for all $\theta \in \text{bd}(B_\varepsilon)$,
  \begin{align*}
    \abs{D_1} < \varepsilon^3, \quad D_2 < -\lambda_{\min} \varepsilon^2 / 4, \quad \abs{D_3} \le A\varepsilon^3\enspace,
  \end{align*}
  where $A > 0$ is a constant.
  It follows that,
  \begin{align*}
    D_1 + D_2 + D_3 < \varepsilon^3 + A\varepsilon^3 - \frac{\lambda_{\min}}{4} \varepsilon^2 = \left((A + 1)\varepsilon - \frac{\lambda_{\min}}{4}\right)\varepsilon^2 < 0, \quad \text{if } \varepsilon < \frac{\lambda_{\min}}{4(A+1)} \enspace,
  \end{align*}
  and thus \eqref{eq:target} holds.

  Now, following a similar argument as in \Cref{lem:null_info}, we obtain
  \begin{align*}
    \frac1n S_{\post{\tau_n+1}}(\htheta_n)
    &= \frac1n S_{\post{\tau_n+1}}(\theta^*) + o_p(1) \rightarrow_p \blambda S_1^* \\
    \frac1n \info_n(\htheta_n;\tau_n)
    &= \frac1n \info_n(\theta^*;\tau_n) + o_p(1) \rightarrow_p \blambda \calI_1^*
      - \blambda \calI_1^* \left(\lambda \calI_0^* + \blambda
      \calI_1^*\right)^{-1} \blambda \calI_1^*
    \equiv \calI^*\enspace,
  \end{align*}
  where $\calI^*$ is positive definite since both $\calI_0^*$ and $\calI_1^*$ are positive definite.
  This implies
  \begin{align*}
    \frac{1}{n} R_n(\tau_n)
    &=
    \left(\frac{1}{n} S_{\post{\tau_n+1}}(\htheta_n)\right)^\top
    \left(\frac{1}{n} \info_n (\htheta_n;\tau_n)\right)
    \left(\frac{1}{n} S_{\post{\tau_n+1}}(\htheta_n)\right) \rightarrow_p
    (\blambda S_1^*)^\top (\calI^*)^{-1} (\blambda S_1^*)\enspace.
  \end{align*}
\end{proof}

To show the power consistency of the proposed tests, it suffices to prove $H_{\text{lin}}(\alpha)/n = o(1)$ and $H_{p}(\alpha)/n = o(1)$ for all $p \in [P]$.
For this purpose, we recall a concentration inequality valid for $\chi^2$ distributions introduced in \cite{birge2001alternative}.

\begin{lemma}
  \label{lemma:non-central_chi-square}
  Let $\obs$ be a chi-square random variable with degrees of freedom $d$, that is, $\obs \sim \chi_d^2$. Then, for all $x > 0$,
  \begin{align*}
    \prob\left\{ \obs \ge d + 2\sqrt{dx} + 2x \right\} \le e^{-x} \enspace.
  \end{align*}
\end{lemma}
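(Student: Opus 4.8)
This is the classical Laurent--Massart (Birg\'e--Massart) deviation bound, so one option is simply to cite \cite{birge2001alternative}; since the argument is short, the plan is to reproduce it via a Chernoff bound adapted to the chi-square moment generating function, followed by a one-variable reduction. First I would write $\obs = \sum_{i=1}^d Z_i^2$ with $Z_1,\dots,Z_d$ i.i.d.\ standard normal and recall the moment generating function $\Ex[e^{t\obs}] = (1-2t)^{-d/2}$, valid for $t \in (0, 1/2)$. Then, for any threshold $u > 0$ and any such $t$, the exponential Markov inequality gives
\[
  \prob(\obs \ge u) \le e^{-tu}\,\Ex[e^{t\obs}] = \exp\!\Big(-tu - \tfrac{d}{2}\log(1-2t)\Big) \enspace.
\]

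The next step is to optimize the exponent $t \mapsto -tu - \tfrac{d}{2}\log(1-2t)$ over $t \in (0,1/2)$. Setting its derivative to zero yields $1 - 2t = d/u$, hence $t^\ast = (u-d)/(2u)$. Since $x > 0$ forces the threshold $u := d + 2\sqrt{dx} + 2x$ to satisfy $u > d$, we get $t^\ast \in (0,1/2)$, so $t^\ast$ is admissible; substituting it back gives
\[
  \prob(\obs \ge u) \le \exp\!\Big(-\tfrac{1}{2}(u - d) + \tfrac{d}{2}\log(u/d)\Big) \enspace.
\]

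Finally I would substitute $u = d + 2\sqrt{dx} + 2x$ and show the exponent is at most $-x$. Writing $\delta := \sqrt{x/d}$, one has $u/d = 1 + 2\delta + 2\delta^2$ and $\tfrac{1}{2}(u-d) = \sqrt{dx} + x = d\delta + d\delta^2$, so the desired inequality reduces to $\tfrac{d}{2}\log(1 + 2\delta + 2\delta^2) \le d\delta$, i.e.\ to the elementary bound $\log(1 + 2\delta + 2\delta^2) \le 2\delta$ for $\delta \ge 0$; this follows by truncating the exponential series, $e^{2\delta} \ge 1 + 2\delta + (2\delta)^2/2 = 1 + 2\delta + 2\delta^2$.

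There is no deep obstacle here; the only points requiring a little care are checking that the optimizer $t^\ast$ lies in $(0,1/2)$ (which is exactly where $x>0$, hence $u>d$, is used) and the bookkeeping in the final substitution. The last elementary inequality is the crux, and it is immediate from the power series of the exponential.
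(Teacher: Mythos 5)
Your proof is correct. The paper itself does not prove this lemma at all --- it simply cites \cite{birge2001alternative} (the Birg\'e/Laurent--Massart bound) --- so your self-contained Chernoff argument is a welcome addition rather than a deviation. All the steps check out: the optimizer $t^\ast = (u-d)/(2u)$ is admissible precisely because $x>0$ forces $u>d$; the exponent at the optimum is $-\tfrac12(u-d)+\tfrac{d}{2}\log(u/d)$; and with $\delta=\sqrt{x/d}$ the claim reduces exactly to $1+2\delta+2\delta^2\le e^{2\delta}$, which follows from truncating the exponential series. For comparison, the standard Laurent--Massart derivation instead bounds the centered log-MGF by $dt^2/(1-2t)$ and then inverts the resulting quadratic; your route of exact optimization followed by an elementary inequality is equally valid and arguably more transparent.
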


\begin{corollary}
  Suppose that \Cref{asmp:fixed} is true and $S_1^* \neq 0$, then the three tests $\psi_\text{auto}, \psi_\lin, \psi_\scan$ are consistent in power.
\end{corollary}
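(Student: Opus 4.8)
The whole argument rests on a mismatch of scales: by \Cref{prop:fix_alternative} the score statistic evaluated at the true changepoint $\tau_n$ grows \emph{linearly} in $n$, while the Bonferroni-corrected thresholds grow only \emph{logarithmically}, so each statistic eventually overshoots its threshold with probability tending to one. The first move is to dispose of the thresholds. Applying \Cref{lemma:non-central_chi-square} with $x = \log(1/\beta)$ shows that the upper $\beta$-quantile of $\chi_k^2$ satisfies $q_{\chi_k^2}(\beta) \le k + 2\sqrt{k\log(1/\beta)} + 2\log(1/\beta)$. For $H_{\text{lin}}(\alpha) = q_{\chi_d^2}(\alpha/n)$ this gives $H_{\text{lin}}(\alpha) \le d + 2\sqrt{d\log(n/\alpha)} + 2\log(n/\alpha) = O(\log n)$, hence $H_{\text{lin}}(\alpha)/n \to 0$; and since $\log(1/\beta) = \log\!\big(\binom{d}{p}n(p+1)^2/\alpha\big) = O(\log n)$ for fixed $d$, $p$, $\alpha$, the same bound yields $H_p(\alpha)/n \to 0$ for every $p \in [P]$. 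This is the reduction flagged in the text preceding the corollary, and it is the only genuinely analytic step.

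Next I would combine these with \Cref{prop:fix_alternative}. For the linear test, $R_{\lin} \ge R_n(\tau_n)$ gives $R_{\lin}/n \ge R_n(\tau_n)/n \to_p c := (\blambda S_1^*)^\top(\calI^*)^{-1}(\blambda S_1^*)$, and since $\calI^*$ is positive definite and $\blambda = 1-\lambda > 0$, the hypothesis $S_1^* \neq 0$ forces $c > 0$; with $H_{\text{lin}}(\alpha_l)/n \to 0$ this yields $\prob(R_{\lin} > H_{\text{lin}}(\alpha_l) \mid \halt) \to 1$, i.e.\ $\psi_{\text{lin}}(\alpha_l)$ is consistent in power. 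For the scan test, fix a coordinate $j$ with $(S_1^*)_j \neq 0$ (one exists since $S_1^* \neq 0$); running the Taylor-expansion/consistency argument from the proof of \Cref{prop:fix_alternative} restricted to the $j$-th coordinate gives $n^{-1}[S_{\tau_n+1:n}(\htheta_n)]_j \to_p \blambda(S_1^*)_j$ and $n^{-1}[\info_n(\htheta_n;\tau_n)]_{jj} \to_p [\calI^*]_{jj} > 0$, so $R_n(\tau_n,\{j\})/n \to_p (\blambda(S_1^*)_j)^2/[\calI^*]_{jj} =: c_j > 0$. Hence $R_{\text{scan}}(\alpha_s) \ge H_1(\alpha_s)^{-1}R_n(\tau_n,\{j\}) = \big(n/H_1(\alpha_s)\big)\cdot\big(R_n(\tau_n,\{j\})/n\big)$, where the first factor diverges by the threshold bound and the second converges to $c_j > 0$; therefore $R_{\text{scan}}(\alpha_s) \to_p \infty$ and $\prob(R_{\text{scan}}(\alpha_s) > 1 \mid \halt) \to 1$, i.e.\ $\psi_{\text{scan}}(\alpha_s)$ is consistent in power. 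Finally, since $\psi_{\text{auto}}(\alpha) = \max\{\psi_{\text{lin}}(\alpha_l), \psi_{\text{scan}}(\alpha_s)\}$ and either summand alone already has power tending to $1$, so does the auto-test.

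The main obstacle is essentially Step 1, the $o(n)$ control of the Bonferroni thresholds, which however drops out immediately from \Cref{lemma:non-central_chi-square}; everything else is a direct consequence of \Cref{prop:fix_alternative}. The one point needing a little care is the coordinatewise limit used for the scan test, but this is the same computation as in \Cref{prop:fix_alternative} with $T = \{j\}$ in place of the full index set, so no new ideas are required.
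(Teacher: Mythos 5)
Your proof is correct and follows essentially the same route as the paper's: bound the Bonferroni thresholds via the chi-square tail inequality of \Cref{lemma:non-central_chi-square} to get $H_{\text{lin}}(\alpha)/n \to 0$ and $H_p(\alpha)/n \to 0$, then compare with the linear-in-$n$ growth of $R_n(\tau_n)$ established in \Cref{prop:fix_alternative}. Your coordinatewise argument for the scan test---choosing $j$ with $(S_1^*)_j \neq 0$ and showing $R_n(\tau_n,\{j\})/n \to_p (\blambda (S_1^*)_j)^2/[\calI^*]_{jj} > 0$---is a welcome addition that makes explicit a step the paper's proof leaves implicit when it asserts that $H_p(\alpha)/n \to 0$ implies power consistency of $\psi_{\scan}$.
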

\begin{proof}
  According to \Cref{lemma:non-central_chi-square}, we have, for any $\alpha \in (0, 1)$,
  \begin{align*}
    H_\text{lin}(\alpha) = q_{\chi_d^2}(\alpha/n) \le d + 2\sqrt{d \log(n/\alpha)} + 2\log(n/\alpha) \enspace,
  \end{align*}
  and thus $H_\text{lin}(\alpha)/n \rightarrow 0$.
  Recall from \Cref{prop:fix_alternative} that
  \begin{align*}
    \frac{1}{n} R_n(\tau_n)
    \rightarrow_p
    (\blambda S_1^*)^\top (\calI^*)^{-1} (\blambda S_1^*)\enspace.
  \end{align*}
  If $S_1^* \neq 0$, then it follows from the positive definiteness of $\calI^*$ that
  \begin{align*}
    \bbP(\psi_\text{lin}(\alpha) = 1) = \bbP\left(R_\text{lin} > H_\text{lin}(\alpha)\right) \ge \bbP\left(\frac1n R_n(\tau_n) > \frac1n H_\text{lin}(\alpha)\right) \rightarrow 1 \enspace.
  \end{align*}
  Analogously, we get
  \begin{align*}
    H_{p}(\alpha) = q_{\chi_p^2}\left(\alpha/\left(\binom{d}{p}n(p+1)^2\right)\right)
    \le p + 2\Big\{p\log{\Big[ \binom{d}{p} n(p+1)^2 / \alpha \Big]}\Big\}^{1/2}
        + 2\log{\Big[ \binom{d}{p} n(p+1)^2 / \alpha \Big]}\enspace,
  \end{align*}
  which implies $H_p(\alpha) / n \rightarrow 0$.
  Therefore, it follows that $\bbP(\psi_\text{scan}(\alpha) = 1) \rightarrow 1$, and subsequently, $\bbP(\psi_\text{auto}(\alpha) = 1) \rightarrow 1$.
\end{proof}

\subsection{Local alternative hypothesis}
Under local alternative hypothesis, we make the following assumptions.
\begin{assumption}\label{asmp:local}
  Let $\obs_1, \dots, \obs_n$ be an independent sample and $\{p_\theta: \theta \in \Theta \subset \bbR^d\}$ be a family of density functions.
  Suppose that there exists $\tau_n \in [n-1]$ such that $\obs_1,\dots,\obs_{\tau_n} \sim p_{\theta_0}$, $\obs_{\tau_n + 1},\dots,\obs_n \sim p_{\theta_n}$ in which $\theta_n = \theta_0 + hn^{-1/2}$ with $h \neq 0$, and $\tau_n / n \rightarrow \lambda \in (0, 1)$.
  Moreover, suppose that the following assumptions hold:
  \begin{enumerate}[label=A''\arabic*,itemsep=1pt,parsep=0pt,topsep=0pt,partopsep=0pt]
    \item \label{cond:2pp} : ~
    $\Theta$ contains an open neighborhood $\Theta_0$ of $\theta_0$ for which
    \begin{enumerate}[label=\ref*{cond:2pp}-(\alph*)]
      \item : \label{cond:2ppa} ~ $\ell(\theta) := \ell(\theta|x) := \log{p_\theta(x)}$ is twice continuously differentiable in $\theta$ almost surely.
      \item : \label{cond:2ppb} ~ $\nabla_{ijk}^3 \ell(\theta|x)$ exists and satisfies $\abs{\nabla_{ijk}^3 \ell(\theta|x)} \le M_{ijk}(x)$ for $\theta \in \Theta_0$ and $i,j,k \in [d]$ almost surely with $\bbE_{\theta_0} M_{ijk}(\obs) < \infty$.
    \end{enumerate}
    \item : ~ \label{cond:3pp} $\bbE_{\theta_0}[\nabla_{\theta} \ell(\theta_0)] = \nabla_{\theta} \bbE_{\theta_0}[\ell(\theta)]|_{\theta = \theta_0} = S_0$.
    \item : ~ \label{cond:4pp} $\bbE_{\tpar}[\nabla_\theta \ell(\tpar) \nabla_\theta \ell(\tpar)^\top] = \bbE_{\theta_0}[-\nabla_{\theta}^2 \ell(\theta_0)] = \calI_0$ is positive definite.
  \end{enumerate}
\end{assumption}

\begin{proposition}[Local alternative hypothesis]\label{prop:local_alternatives}
  Under \Cref{asmp:local},
  there exists a sequence of MLE $\htheta_n$ such that
  \begin{align}
    \label{eq:consistency_in_info_and_asn_mle}
    \frac{n}{\tau_n(n - \tau_n)}\info_n(\htheta_n; \tau_n) &\rightarrow_p \mathcal{I}_0 \\
    \sqrt{n}(\htheta_n - \theta_0) &\rightarrow_d \calN_d\left(\blambda h, \calI_0^{-1}\right) \label{eq:mle_local}\\
    \sqrt{\frac{n}{\tau_n(n - \tau_n)}} S_{\post{\tau_n+1}}(\htheta_n) &\rightarrow_d \calN_d(\sqrt{\lambda\blambda }\ \calI_0 h, \calI_0)\enspace.
  \label{eq:asn_score_local}
  \end{align}
  In particular,
  \begin{align*}
    R_n(\tau_n) &\rightarrow_d \chi_d^2\big(\lambda \bar \lambda h^\top \calI_0 h\big) \\
    R_n(\tau_n, T) &\rightarrow_d \chi_{|T|}^2\Big(\lambda \bar \lambda [\calI_0 h]_T^\top [\calI_0]_{T,T}^{-1} [\calI_0 h]_T \Big) \enspace.
  \end{align*}
\end{proposition}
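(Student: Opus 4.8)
The plan is to reuse, essentially verbatim, the three-step scheme behind \Cref{prop:null} and \Cref{prop:fix_alternative} — consistency of a suitable root of the likelihood equation, a Taylor expansion of the score about $\theta_0$, and Slutsky's lemma — while tracking the drift introduced by $\theta_n = \theta_0 + h n^{-1/2}$ through each step. The one genuinely new ingredient is a version of \Cref{lemma:sum_of_two_scores} under the local alternative, in which the normalized score $Z_n(\theta_0)$ acquires the asymptotic mean $\bar\lambda\,\calI_0 h$.

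First I would show that there is a sequence of MLEs with $\htheta_n \rightarrow_p \theta_0$. This copies the proof of \Cref{prop:fix_alternative} with the minimizer $\theta^*$ replaced by $\theta_0$: writing $\tfrac1n[\ell_n(\theta)-\ell_n(\theta_0)] = D_1+D_2+D_3$ for the gradient, Hessian, and cubic-remainder terms, and bounding each on $\text{bd}(B_\varepsilon)$, a small sphere around $\theta_0$. The only difference from the fixed-alternative case is that the second block $\obs_{\tau_n+1},\dots,\obs_n$ is now drawn from the drifting law $p_{\theta_n}$; since $\theta_n\to\theta_0$ and the third derivatives are dominated uniformly on $\Theta_0$ by Assumption~\ref{cond:2ppb}, a triangular-array law of large numbers shows the second-block averages have the same $\bbE_{\theta_0}$-limits as in the i.i.d.\ case, so that $\tfrac1n\nabla_\theta\ell_n(\theta_0)\rightarrow_p 0$ and $-\tfrac1n\nabla_\theta^2\ell_n(\theta_0)\rightarrow_p\calI_0$, and the rest of the argument goes through unchanged.

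The heart of the proof is the drifting analog of \Cref{lemma:sum_of_two_scores}:
\begin{equation*}
  \begin{pmatrix} \sqrt{\tau_n/n}\,Z_{\tau_n}(\theta_0) \\ Z_n(\theta_0) \end{pmatrix}
  \rightarrow_d
  \mathcal{N}\!\left(
    \begin{pmatrix} 0 \\ \bar\lambda\,\calI_0 h \end{pmatrix},
    \begin{pmatrix} \lambda\calI_0 & \lambda\calI_0 \\ \lambda\calI_0 & \calI_0 \end{pmatrix}
  \right).
\end{equation*}
Since the first block is genuinely i.i.d.\ from $p_{\theta_0}$, $Z_{\tau_n}(\theta_0)$ is an ordinary average and the classical CLT applies to it; and because the two blocks are independent, it suffices to add the limit of the second-block score $(n-\tau_n)^{-1/2}S_{\tau_n+1:n}(\theta_0)$, rescaled by $\sqrt{(n-\tau_n)/n}$. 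For the latter I would center by $\bbE_{\theta_n}[s_{\theta_0}(\obs)]$, where $s_{\theta_0} := \nabla_\theta\log p_{\theta_0}$: a first-order Taylor expansion of $\theta\mapsto\bbE_\theta[s_{\theta_0}(\obs)]$ at $\theta_0$, together with the information identity $\nabla_\theta\bbE_\theta[s_{\theta_0}(\obs)]\big|_{\theta_0} = \calI_0$ (obtained by differentiating under the integral and using Assumption~\ref{cond:4pp}), gives $\bbE_{\theta_n}[s_{\theta_0}(\obs)] = \calI_0 h\, n^{-1/2} + o(n^{-1/2})$, so the deterministic part contributes $\bar\lambda\,\calI_0 h$ in the limit; the centered part is a triangular array with conditional variances converging to $\bar\lambda\,\calI_0$, to which the Lindeberg–Feller CLT applies. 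Justifying the differentiation-under-the-integral step and checking the Lindeberg condition for this drifting array is the main obstacle; everything else below is transcription. (One could instead invoke Le Cam's first and third lemmas — the local-alternative law is mutually contiguous with the null law by local asymptotic normality — but the direct computation stays closer to the paper's style.)

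With these two facts in hand I would finish by copying the remaining steps. The first-order condition $S_{1:n}(\htheta_n)=0$, Taylor-expanded about $\theta_0$ and combined with $\htheta_n\rightarrow_p\theta_0$, $-\tfrac1n\nabla_\theta^2\ell_n(\theta_n^*)\rightarrow_p\calI_0$, and the cubic-remainder bound (as in \Cref{lem:null_info}), yields $\sqrt n(\htheta_n-\theta_0) = \calI_0^{-1}Z_n(\theta_0)+o_p(1)$, hence \eqref{eq:mle_local} by the display above; the Schur-complement computation of \Cref{lem:null_info}, using the triangular-array LLN for the second-block Hessian, gives \eqref{eq:consistency_in_info_and_asn_mle}; and the same algebra as in the proof of \Cref{prop:null} expresses $\sqrt{n/[\tau_n(n-\tau_n)]}\,S_{\tau_n+1:n}(\htheta_n)$ as $-\frac{1}{\sqrt{1-\lambda}}Z_{\tau_n}(\theta_0) + \frac{\sqrt\lambda}{\sqrt{1-\lambda}}Z_n(\theta_0) + o_p(1)$, whose limiting mean works out to $\sqrt{\lambda\bar\lambda}\,\calI_0 h$ and whose limiting covariance collapses to $\calI_0$ (the coefficients in $\lambda$ are exactly those of the null case), establishing \eqref{eq:asn_score_local}. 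Finally, writing $A_n := \sqrt{n/[\tau_n(n-\tau_n)]}\,S_{\tau_n+1:n}(\htheta_n)$ and $B_n := (n/[\tau_n(n-\tau_n)])\,\info_n(\htheta_n;\tau_n)$, Slutsky gives $R_n(\tau_n)=A_n^\top B_n^{-1}A_n\rightarrow_d Z^\top\calI_0^{-1}Z$ with $Z\sim\mathcal{N}(\sqrt{\lambda\bar\lambda}\,\calI_0 h,\calI_0)$; since $\calI_0^{-1/2}Z\sim\mathcal{N}(\sqrt{\lambda\bar\lambda}\,\calI_0^{1/2}h,I_d)$, this is $\chi_d^2(\lambda\bar\lambda\,h^\top\calI_0 h)$, and restricting all quantities to the coordinates in $T$ gives the stated $\chi_{|T|}^2\big(\lambda\bar\lambda\,[\calI_0 h]_T^\top[\calI_0]_{T,T}^{-1}[\calI_0 h]_T\big)$ law.
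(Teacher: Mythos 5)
Your proposal is correct, but it reaches the conclusion by a genuinely different route than the paper. The paper's proof never works under the drifting measure directly: it computes everything under $\prob_{\theta_0}$ --- the consistency of $\htheta_n$, the linearization $\sqrt{n}(\htheta_n-\theta_0)=\calI_0^{-1}Z_n(\theta_0)+o_p(1)$, and the analogous linearization of $S_{\tau_n+1:n}(\htheta_n)$ --- then establishes the local asymptotic linearity of $\log\bigl(d\prob_{\theta_0,\theta_n}^{(\tau_n)}/d\prob_{\theta_0}^n\bigr)$, computes the \emph{joint} null limit of each statistic with this log-likelihood ratio via the multivariate CLT, and reads off the mean shift from Le~Cam's third lemma (contiguity also transports the $o_p(1)$ statements and \eqref{eq:consistency_in_info_and_asn_mle} for free). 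You instead prove a drifting analogue of \Cref{lemma:sum_of_two_scores} under $\prob_{\theta_0,\theta_n}^{(\tau_n)}$ itself, extracting the mean $\blambda\,\calI_0 h$ of $Z_n(\theta_0)$ from a Taylor expansion of $\theta\mapsto\bbE_\theta[\nabla_\theta\log p_{\theta_0}(\obs)]$ and a Lindeberg--Feller CLT for the triangular array of second-block scores; you then also need a triangular-array LLN to re-establish consistency of the MLE and the Hessian limits under the drifting law, which the paper gets without extra work from contiguity. (You actually flag the Le~Cam route as the alternative you are \emph{not} taking, whereas it is the one the paper takes.) Both arguments are sound and lead to the same coefficients and the same noncentrality parameters; the trade-off is that your direct computation avoids the contiguity machinery at the price of the differentiation-under-the-integral identity $\nabla_\theta\bbE_\theta[\nabla_\theta\log p_{\theta_0}(\obs)]\bigr\rvert_{\theta=\theta_0}=\calI_0$ and the triangular-array versions of the LLN and CLT --- all standard under \Cref{asmp:local}-type regularity, but none of them entirely free --- while the paper's route concentrates all the work in the single LAN expansion of the likelihood ratio and then recycles the null-hypothesis computations verbatim.
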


\begin{proof}
    In this proof we firstly analyze the behavior of the score statistic under the null hypothesis, then we use Le Cam's third lemma (e.g., \cite{van2000asymptotic}), to attain the asymptotic distribution of the test statistic under local alternatives.

    Under $\prob_0 := \prob_{\theta_0}$, an argument similar to the one in \cref{prop:fix_alternative} implies that there exists a sequence of MLE such that $\htheta_n \rightarrow_p \theta_0$, then \eqref{eq:consistency_in_info_and_asn_mle} directly follows from the proof in \cref{lem:null_info}.
    Furthermore, by Assumption \ref{cond:2ppa} and the mean value theorem, there exists $\bar{\theta}_n$ such that $\Vert \bar{\theta}_n - \theta_0 \Vert \le \Vert \htheta_n - \theta_0 \Vert$, and
    \begin{align*}
      0 = \frac{1}{\sqrt{n}} S_{1:n}(\htheta_n)
        = \frac{1}{\sqrt{n}} S_{1:n}(\theta_0)
        + \frac{1}{n}
        \nabla_{\theta} S_{1:n}(\bar{\theta}_n)\sqrt{n}(\htheta_n - \theta_0)
        \enspace.
    \end{align*}
    Since $\htheta_n \rightarrow_p \theta_0$, we have $\bar \theta_n \rightarrow \theta_0$ and thus, by Assumption \ref{cond:2ppb},
    \begin{align*}
      \frac1n \nabla_\theta S_{1:n}(\bar \theta_n) = \frac1n \nabla_\theta S_{1:n}(\theta_0) + o_p(1) = - \calI_0 + o_p(1) \enspace.
    \end{align*}
    Therefore,
    \begin{align*}
      \sqrt{n}(\htheta_n - \theta_0)
      = \calI_0^{-1} \frac{1}{\sqrt{n}} S_{1:n}(\theta_0) + o_p(1)
      = \frac{1}{\sqrt{n}} \sum_{i=1}^n \bar{S}_i(\theta_0) + o_p(1)\enspace,
    \end{align*}
    where $\bar{S}_i(\theta_0) = \calI_0^{-1} \nabla_{\theta}\ell_i(\theta_0)$.

    We then prove the local asymptotic linearity of the log-likelihood ratio.
    We denote the joint probability measure of $\obs_1,\dots,\obs_n$ under the local alternative as $\prob_{\theta_0,\theta_n}^{(\tau_n)}$.
    It holds that
    \begin{align*}
      \log{\frac{d \prob_{\theta_0, \theta_n}^{(\tau_n)}}{d \prob_{\theta_0}^n}}
      &= \ell_{\post{\tau_n+1}}(\theta_n) - \ell_{\post{\tau_n+1}}(\theta_0) \\
      &= (\theta_n - \tpar)^\top S_{\post{\tau_n+1}}(\tpar) + \frac12 (\theta_n - \tpar)^\top \nabla_\theta S_{\post{\tau_n+1}}(\tpar) (\theta_n - \tpar) + o_p(1) \\
      &= \frac{h^\top}{\sqrt{n}} S_{\post{\tau_n+1}}(\theta_0) + \frac12 h^\top \frac{\nabla_\theta S_{\post{\tau_n+1}}(\tpar)}{n} h + o_p(1)
      = h^\top \frac{1}{\sqrt{n}} S_{\post{\tau_n+1}}(\theta_0) - \frac{\blambda }{2} h^\top  \calI_0 h + o_p(1) \enspace.
    \end{align*}
    For any $a \in \bbR^d$, it follows from the multivariate Central Limit Theorem~\cite{billingsley2008probability} that
    \begin{align*}
      \begin{pmatrix}
        a^\top \sqrt{n}(\htheta_n - \theta_0) \\ \log{\frac{d \prob_{\theta_0, \theta_n}^{(\tau_n)}}{d \prob_{\theta_0}^n}}
      \end{pmatrix}
      &= \frac{1}{\sqrt{n}} \left[\sum_{i=1}^{\tau_n} \begin{pmatrix} a^\top \bar{S}_i(\theta_0) \\ 0 \end{pmatrix} + \sum_{i=\tau_n+1}^n \begin{pmatrix} a^\top \bar{S}_i(\theta_0) \\ h^\top S_i(\theta_0) \end{pmatrix}\right] - \begin{pmatrix} 0 \\ \frac{\sigma^2}{2} \end{pmatrix} + o_p(1) \\
      &\rightarrow_d \calN_2\left(\begin{pmatrix} 0 \\ -\sigma^2/2 \end{pmatrix}, \begin{pmatrix} a^\top \calI_0^{-1} a & \blambda a^\top h \\ \blambda a^\top h & \sigma^2\end{pmatrix}\right) \enspace,
    \end{align*}
    where $\sigma^2 := \blambda h^\top \calI_0 h$.
    Hence, the assumptions of Le Cam's third lemma are fulfilled, and we conclude that, under $\prob_{\theta_0, \theta_n}^{(\tau_n)}$,
    \begin{align*}
      a^\top \sqrt{n}(\htheta_n - \theta_0) \rightarrow_d \calN\left(\blambda a^\top h, a^\top \calI_0^{-1} a\right) \enspace.
    \end{align*}
    By the Cram\'er-Wold device, the statement \eqref{eq:mle_local} holds.

    Notice that, under $\prob_{\theta_0}$,
    \begin{align*}
      \frac{1}{\sqrt{n}} S_{\post{\tau_n+1}}(\htheta_n) &= \frac{1}{\sqrt{n}} S_{\post{\tau_n+1}}(\theta_0) - \blambda \calI_0 \sqrt{n}(\htheta_n - \theta_0) + o_p(1) \\
      &= \frac{1}{\sqrt{n}} \left[\sum_{i=1}^{\tau_n} -\blambda S_i(\theta_0) + \sum_{i=\tau_n+1}^n \lambda S_i(\theta_0)\right] + o_p(1)\enspace.
    \end{align*}
    An analogous argument gives, under $\bbP_{\tpar, \theta_n}^{(\tau_n)}$,
    \begin{align*}
      \frac{1}{\sqrt{n}} S_{\post{\tau_n+1}}(\htheta_n) \rightarrow_d \calN_d(\lambda\blambda \calI_0 h, \lambda\blambda \calI_0) \enspace,
    \end{align*}
    which yields \eqref{eq:asn_score_local}.
    Now, the asymptotic distributions of $R_n(\tau_n)$ and $R_n(\tau_n, T)$ follows immediately from the continuous mapping theorem.
\end{proof}

\subsection{Approximation in the scan statistic}
Recall that, in the computation of the scan statistic, we approximate the maximizer of $\max_{T \in \calT_p} R_n(\tau, T)$ by the indices of the largest $p$ components in $v(\tau) := S_{\tau+1:n}(\est_n)^\top \text{diag}(\info_n(\est_n; \tau))^{-1} S_{\tau+1:n}(\est_n)$.
The next lemma verifies that this approximation is accurate when the difference between the largest eigenvalue and the smallest eigenvalue of $\info_n(\est_n; \tau)^{-1}$ is small compared to $\lVert S_{\tau+1:n}(\est_n) \rVert^2$.
\begin{lemma}
  Let $\alpha \in \bbR^d$, and $A \in \bbR^{d \times d}$ be a symmetric positive definite matrix.
  Consider the optimization problem:
  \begin{align*}
    T^* = \argmax_{T\subset [d], |T| = p} f(T) = \argmax_{T\subset [d], |T| = p} \alpha_T^\top [A_{T, T}]^{-1} \alpha_T, \quad p \in [d] \enspace.
  \end{align*}
  Let $0 < \lambda_1(A) \le \dots \le \lambda_d(A)$ be the eigenvalues of $A$, and $\hat T$ be the indices of the largest $p$ components in $\text{\emph{diag}}(A)^{-1}\alpha^{\odot 2}$, where $\alpha^{\odot 2}$ is the element-wise power.
  Then we have $\lvert f(T^*) - f(\hat T) \rvert \le 2[\lambda_1(A)^{-1} - \lambda_d(A)^{-1}] \norm{\alpha}^2$.
\end{lemma}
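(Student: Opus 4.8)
The plan is a sandwiching argument through the diagonal surrogate
\[
  g(T) := \sum_{i \in T} \frac{\alpha_i^2}{A_{ii}} = \alpha_T^\top \mathrm{diag}(A_{T,T})^{-1} \alpha_T \enspace,
\]
so that, by definition, $\hat T = \argmax_{T \subset [d],\, |T| = p} g(T)$, exactly as $T^* = \argmax_{T \subset [d],\, |T|=p} f(T)$. Set $\varepsilon := [\lambda_1(A)^{-1} - \lambda_d(A)^{-1}] \norm{\alpha}^2$. If I can show the uniform bound $|f(T) - g(T)| \le \varepsilon$ for every admissible $T$, then the conclusion follows from the chain
\begin{align*}
  f(T^*) \ge f(\hat T) \ge g(\hat T) - \varepsilon \ge g(T^*) - \varepsilon \ge f(T^*) - 2\varepsilon \enspace,
\end{align*}
where the first inequality is optimality of $T^*$ for $f$, the third is optimality of $\hat T$ for $g$, and the second and fourth are the uniform bound applied at $\hat T$ and at $T^*$ respectively; this gives $0 \le f(T^*) - f(\hat T) \le 2\varepsilon$.

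\textbf{The uniform bound via eigenvalue localization.} For any $T$ with $|T| = p$, writing $\tilde\alpha \in \bbR^d$ for the zero-padded version of $\alpha_T$, the Rayleigh quotient $\tilde\alpha^\top A \tilde\alpha / \norm{\tilde\alpha}^2 = \alpha_T^\top A_{T,T} \alpha_T / \norm{\alpha_T}^2$ is trapped between $\lambda_1(A)$ and $\lambda_d(A)$; more generally Cauchy interlacing gives $\lambda_1(A) \le \lambda_{\min}(A_{T,T}) \le \lambda_{\max}(A_{T,T}) \le \lambda_d(A)$, so every eigenvalue of the symmetric positive definite matrix $[A_{T,T}]^{-1}$ lies in $[\lambda_d(A)^{-1}, \lambda_1(A)^{-1}]$, whence
\begin{align*}
  \lambda_d(A)^{-1} \norm{\alpha_T}^2 \le f(T) \le \lambda_1(A)^{-1} \norm{\alpha_T}^2 \enspace.
\end{align*}
Applying the same reasoning to the $1 \times 1$ principal submatrices $A_{ii} = e_i^\top A e_i \in [\lambda_1(A), \lambda_d(A)]$ and summing over $i \in T$ yields the identical two-sided bound for $g$:
\begin{align*}
  \lambda_d(A)^{-1} \norm{\alpha_T}^2 \le g(T) \le \lambda_1(A)^{-1} \norm{\alpha_T}^2 \enspace.
\end{align*}
Hence $f(T)$ and $g(T)$ both lie in an interval of length $[\lambda_1(A)^{-1} - \lambda_d(A)^{-1}] \norm{\alpha_T}^2$, so $|f(T) - g(T)| \le [\lambda_1(A)^{-1} - \lambda_d(A)^{-1}] \norm{\alpha_T}^2 \le \varepsilon$, using $\norm{\alpha_T} \le \norm{\alpha}$.

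\textbf{Main obstacle and remarks.} There is no substantive obstacle here; the only point that needs care is the eigenvalue localization for principal submatrices, which I would justify either by invoking Cauchy interlacing or, more elementarily, by the zero-padding Rayleigh-quotient observation above (noting separately that $[A_{T,T}]^{-1}$ is again symmetric positive definite so that its extreme eigenvalues are the reciprocals of those of $A_{T,T}$). It is worth double-checking that the factor $2$ in $2\varepsilon$ is genuine and not improvable by this scheme: the approximation error is incurred once in passing from $f(\hat T)$ to $g(\hat T)$ and once in passing from $g(T^*)$ back to $f(T^*)$, so the bound $2\varepsilon$ is the natural output of the sandwich and cannot be tightened without exploiting more structure of $A$.
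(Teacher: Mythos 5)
Your proof is correct and follows essentially the same route as the paper: introduce the diagonal surrogate $g(T) = \alpha_T^\top \mathrm{diag}(A_{T,T})^{-1}\alpha_T$, note that $\hat T$ maximizes $g$ over size-$p$ subsets, bound $|f(T)-g(T)|$ uniformly by localizing the eigenvalues of principal submatrices of $A$ in $[\lambda_1(A),\lambda_d(A)]$, and combine the two optimality relations to incur the approximation error twice. The only cosmetic difference is that you trap $f(T)$ and $g(T)$ in a common interval of length $[\lambda_1(A)^{-1}-\lambda_d(A)^{-1}]\lVert\alpha_T\rVert^2$, whereas the paper bounds $f(T)-g(T)$ and $g(T)-f(T)$ separately using $a_{\max}$ and $a_{\min}$; the content is identical.
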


\begin{proof}
  Define $g(T) := \alpha_T^\top \text{diag}(A_{T, T})^{-1} \alpha_T$.
  According to the definition of $\hat T$, we have, for any $|T| = p$, $g(T) \le g(\hat T)$.
  In particular, we have $g(T^*) \le g(\hat T)$.
  This implies that
  \begin{align*}
    0 \le f(T^*) - f(\hat T) \le f(T^*) - g(T^*) + g(\hat T) - f(\hat T)\enspace,
  \end{align*}
  and thus it suffices to bound $\abs{f(T) - g(T)}$ for every $|T| = p$.

  On the one hand, note that
  \begin{align*}
    f(T) - g(T) = \alpha_T^\top A_{T, T}^{-1} \alpha_T - \alpha_T^\top (\text{diag}(A_{T, T}))^{-1} \alpha_T
    \le \lambda_p(A_{T, T}^{-1}) \norm{\alpha_T}^2 - a_{\max}^{-1} \norm{\alpha_T}^2 \enspace,
  \end{align*}
  where $a_{\max} := \max_{i \in [d]} a_{ii}$.
  By the Courant-Fischer-Weyl min-max principle, we have $0 < \lambda_1(A) \le \lambda_1(A_{T, T})$,
  which implies $\lambda_p(A_{T, T}^{-1}) = \lambda_1(A_{T, T})^{-1} \le \lambda_1(A)^{-1}$.
  Moreover, since $0 < \lambda_1(A) \le a_{\max} \le \lambda_d(A)$, we have $a_{\max}^{-1} \ge \lambda_d(A)^{-1}$.
  It follows that
  \begin{align*}
    f(T) - g(T) \le [\lambda_1(A)^{-1} - \lambda_d(A)^{-1}] \norm{\alpha}^2 \enspace.
  \end{align*}
  On the other hand, we can obtain, similarly,
  \begin{align*}
    g(T) - f(T) \le [a_{\min}^{-1} - \lambda_1(A_{T, T}^{-1})] \norm{\alpha}^2 \le [\lambda_1(A)^{-1} - \lambda_{d}(A)^{-1}] \norm{\alpha}^2
  \end{align*}
  with $a_{\min} := \min_{i \in [d]} a_{ii}$.
  Therefore, we have
  \begin{align*}
    0 \le f(T^*) - f(\hat T) \le 2[\lambda_1(A)^{-1} - \lambda_d(A)^{-1}] \norm{\alpha}^2 \enspace.
  \end{align*}
\end{proof}

\section{Additional Experimental Results}
\label{sec:experiment}

\begin{figure}
  \centering
  \includegraphics[width=0.4\linewidth]{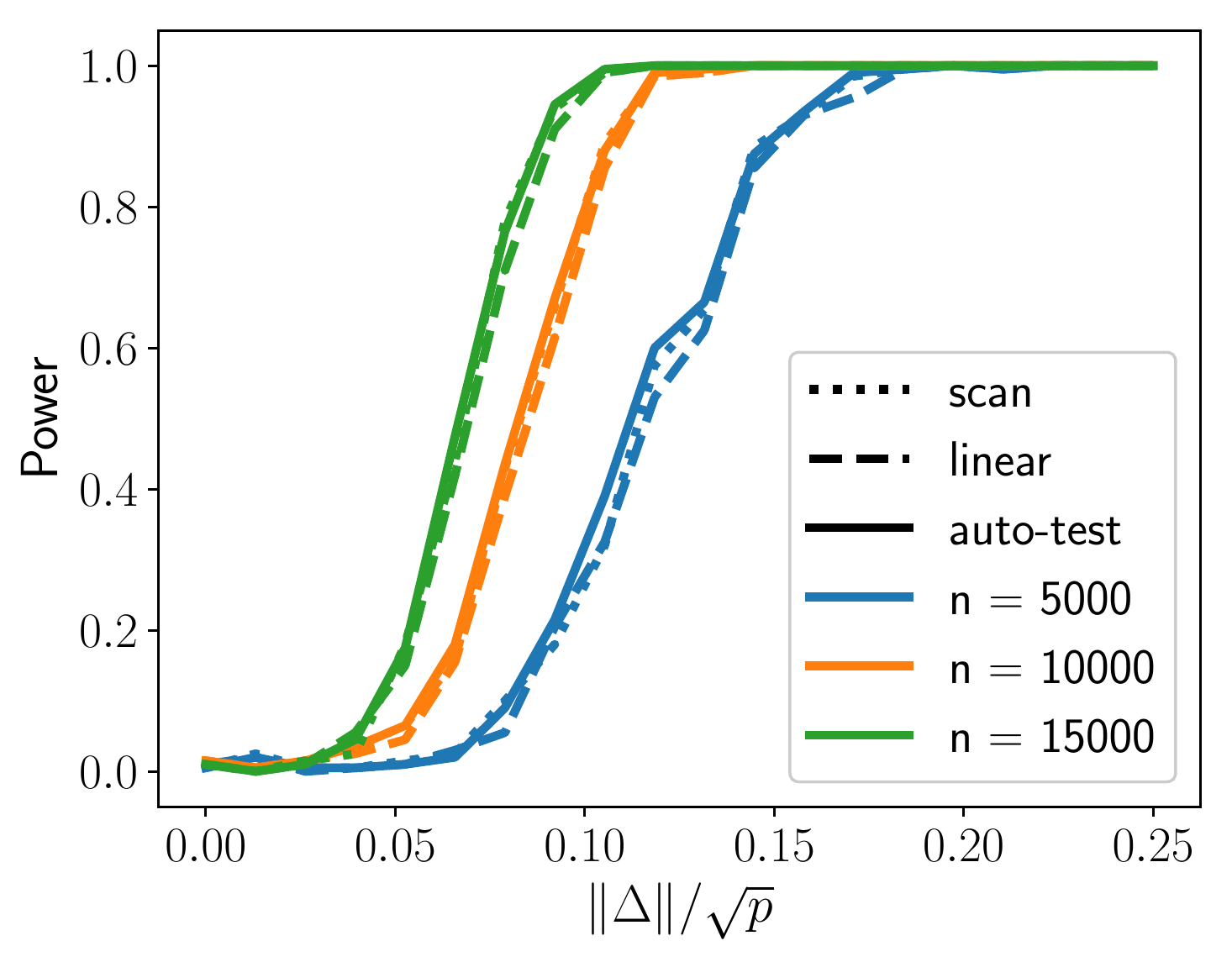}
  \includegraphics[width=0.4\linewidth]{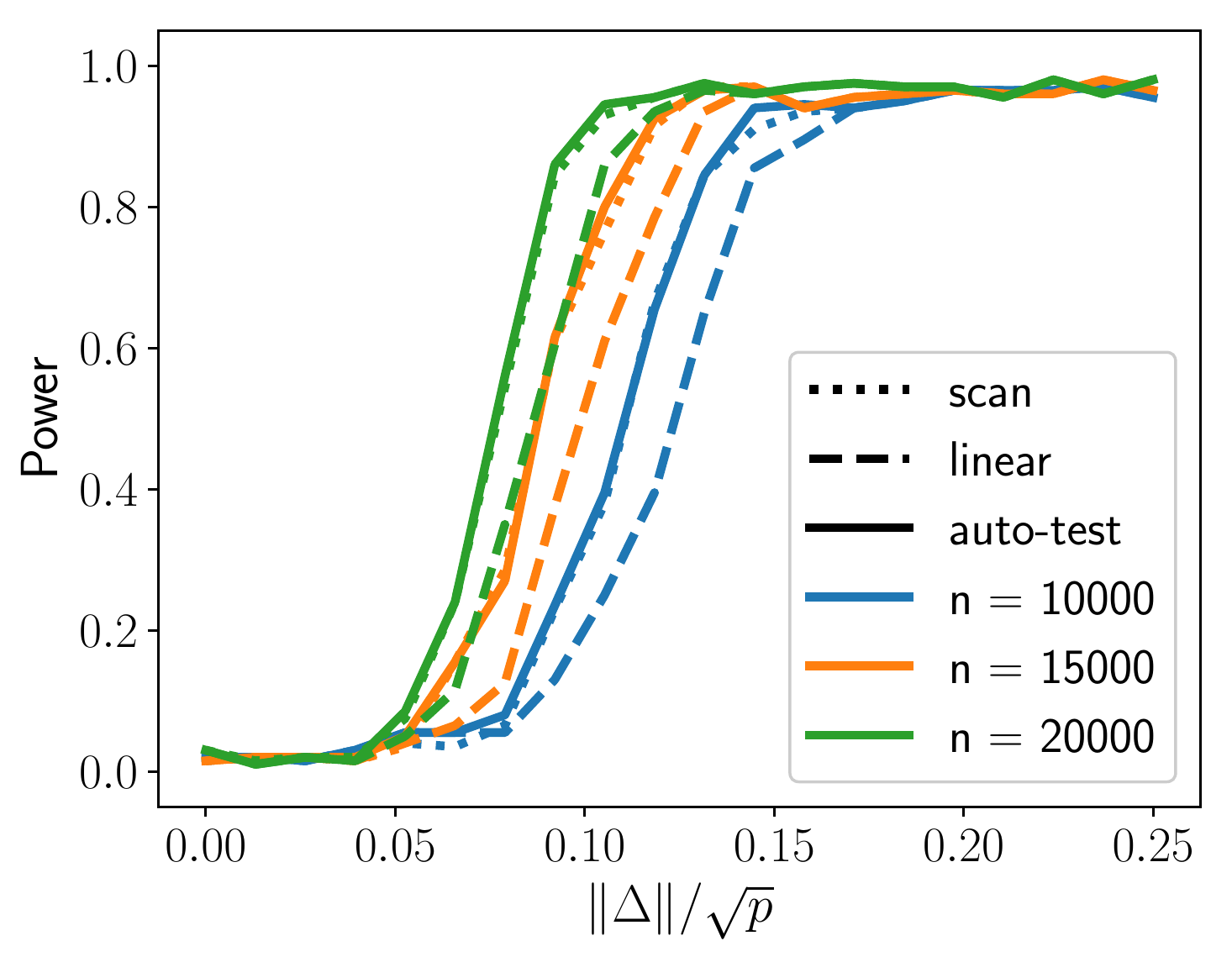}
  \caption{Power versus magnitude of change for HMMs with $N$ hidden states (left: $N=3$; right: $N=7$).}
  \label{fig:hmm}
\end{figure}

In this section, we give additional experimental results investigating and comparing the performance of the linear test, the scan test and the \emph{auto-test} on synthetic data.
We use three different types of lines to represent three tests, and different colors to indicate different sample sizes.
Note that all the statistics are computed only for $\tau \in [n/10, 9n/10]$ to prevent encountering ill-conditioned Fisher information matrix.

\textbf{Hidden Markov model.}
We consider HMMs with $N \in \set{3, 7}$ hidden states and normal emission distribution.
The transition matrix is sampled in the following way: each row (the distribution of next state conditioning on current state) is the sum of vector $(2N)^{-1} \mathbf{1}_N$ and a Dirichlet sample with concentration parameters $0.5 \mathbf{1}_N$, where $\mathbf{1}_N$ is an all one vector of length $N$.
All entries in the resulting vector are positive and sum to one.
Given the state $k \in \{0, \dots, N-1\}$, the emission distribution has mean $k$ and standard deviation $0.01 + 0.09k/(N-1)$ so that they are evenly distributed within $[0.01, 0.1]$.
Since each row of the transition matrix must sum to one, we only view entries in the first $N - 1$ columns as transition parameters.
The post-change transition matrix is obtained by subtracting $\delta$ from the $(1,1)$ entry and adding $\delta$ to the $(1,N)$ entry.

Results are shown in \cref{fig:hmm}.
When $N=3$, three tests have almost identical performance.
When $N=7$, the change becomes sparser, and subsequently, the scan test and the \emph{auto-test} outperform the linear test.
In both cases, the three tests show consistent behavior as the sample size increases.

\textbf{Time series model.}
We then consider two autoregressive--moving-average models---ARMA$(3,2)$ and ARMA$(6,5)$.
For the resulting time series to be stationary, we need to ensure that the polynomial induced by AR coefficients has roots within $(-1,1)$.
We take the following procedure:
we firstly sample $p_0 \in \set{3, 6}$ values that are larger than 1, say $\lambda_1,\dots,\lambda_{p_0}$, then use the coefficients of the polynomial $f_0(x) = \prod_{i=1}^{p_0}(x-\lambda_i^{-1})$ as AR coefficients;
MA coefficients are obtained similarly.
Furthermore, the post-change AR coefficients are created by adding $\delta$ to those $p_0$ values and extracting the coefficients from $f_1(x) = \prod_{i=1}^{p_0}(x-(\lambda_i + \delta)^{-1})$.
The error terms follow a normal distribution with mean 0 and standard deviation 0.1.
Note that for ARMA models we do not have exact control of $\norm{\Delta} / \sqrt{p}$, so readers need to be careful about the range of $x$-axis in \cref{fig:arma}.

As demonstrated in \cref{fig:arma}, the scan test works fairly well for these two ARMA models.
However, the linear test and the \emph{auto-test} have extremely high false alarm rate.
This problem gets more severe as the sample size increases, and hence is not due to the lack of accuracy of the maximum likelihood estimator.

\textbf{Restricted screening components.}
To investigate the high false alarm rate problem in \cref{fig:arma}, we consider the same two ARMA models with the restriction that we only detect changes in the AR coefficients.
As presented in \cref{fig:arma_idx}, all three tests are now consistent in level,
and the linear test and the \emph{auto-test} are slightly more powerful than the scan test.
This suggests that this problem is caused by the non-homogeneity of model parameters.
Indeed, in the experiments in \cref{fig:arma}, the derivatives \wrt AR coefficients are significantly larger than the ones \wrt MA coefficients.
This results in ill-conditioned information matrix and subsequent unstable computation of the linear statistic.
On the contrary, the scan statistic only inverts the submatrix of size $p\times p$, whose condition number is much smaller.
In fact, the parameters selected by the scan statistic are all AR coefficients in our experiments.
Therefore, the scan statistic can produce reasonable results even if the parameters are heterogeneous.
We note that in such situations we can select a small (or even zero) significance level for the linear part in the \emph{auto-test} to obtain reasonable results.

\begin{figure}[t]
  \centering
  \begin{minipage}{0.47\textwidth}
    \centering
    \includegraphics[width=0.49\linewidth]{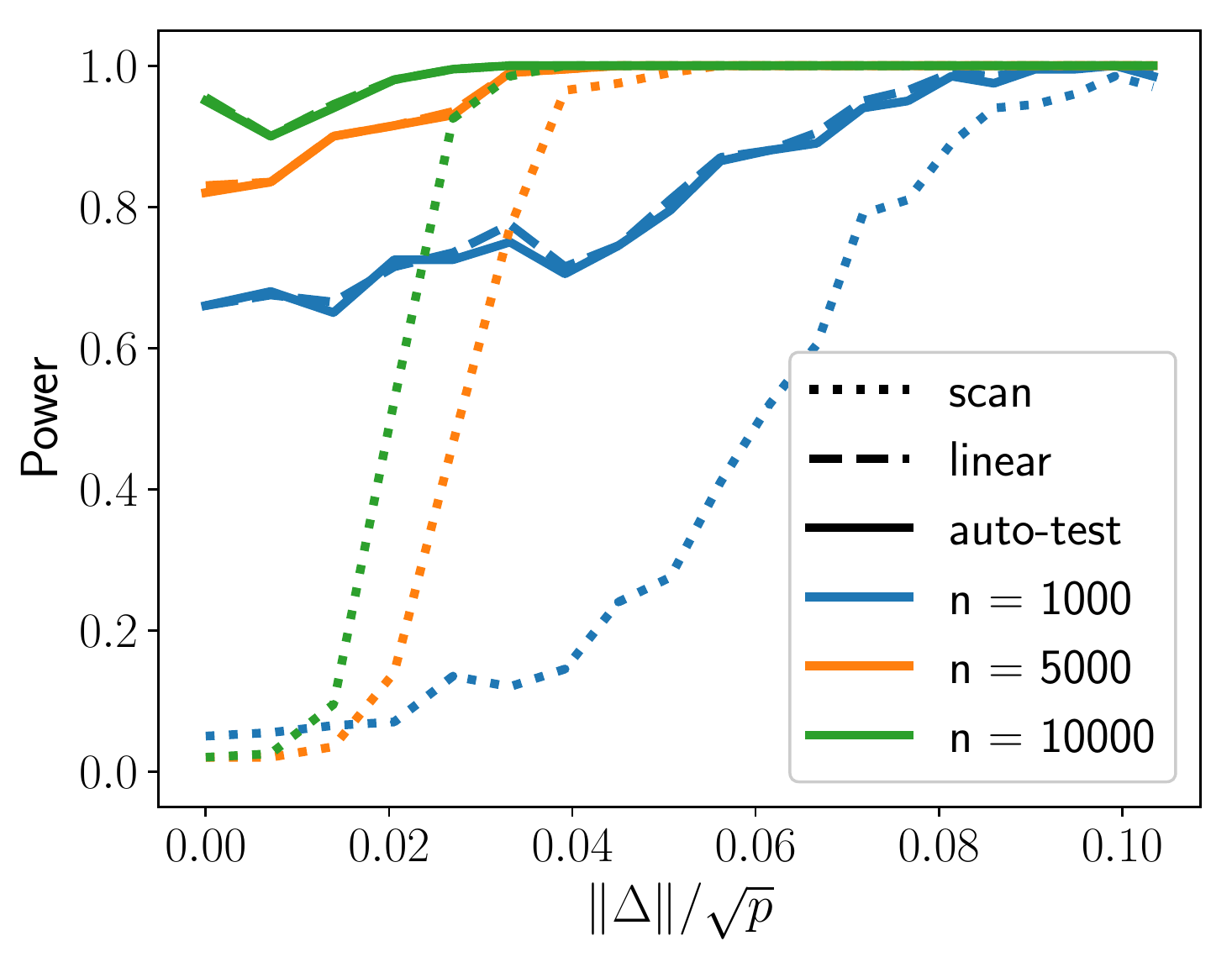}
    \includegraphics[width=0.49\linewidth]{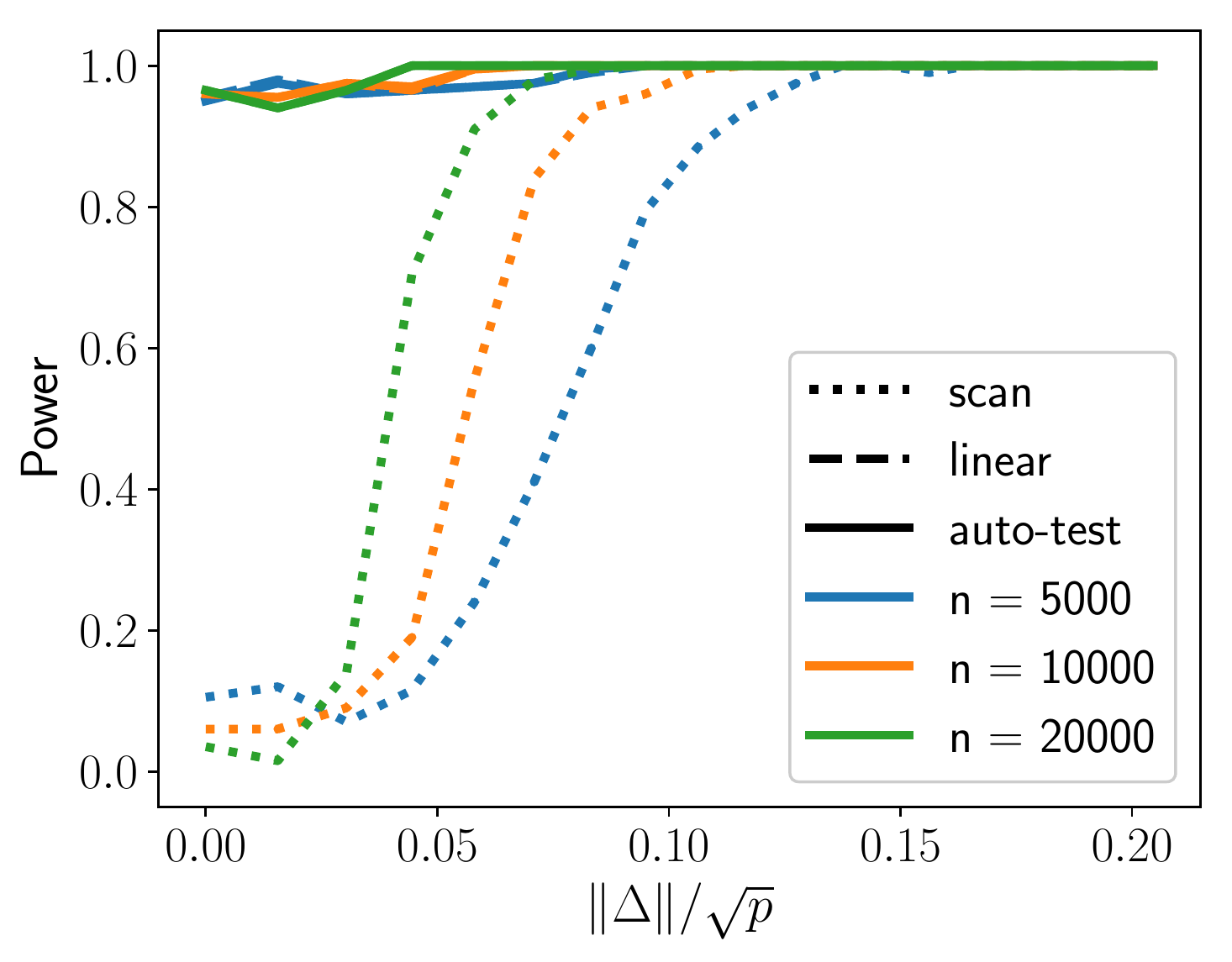}
    \caption{Power versus magnitude of change for ARMA$(3,2)$ (left) and ARMA$(6,5)$ (right). \vspace{0.4cm}}
    \label{fig:arma}
  \end{minipage}
  \hspace{0.5cm}
  \begin{minipage}{0.47\textwidth}
    \centering
    \includegraphics[width=0.49\linewidth]{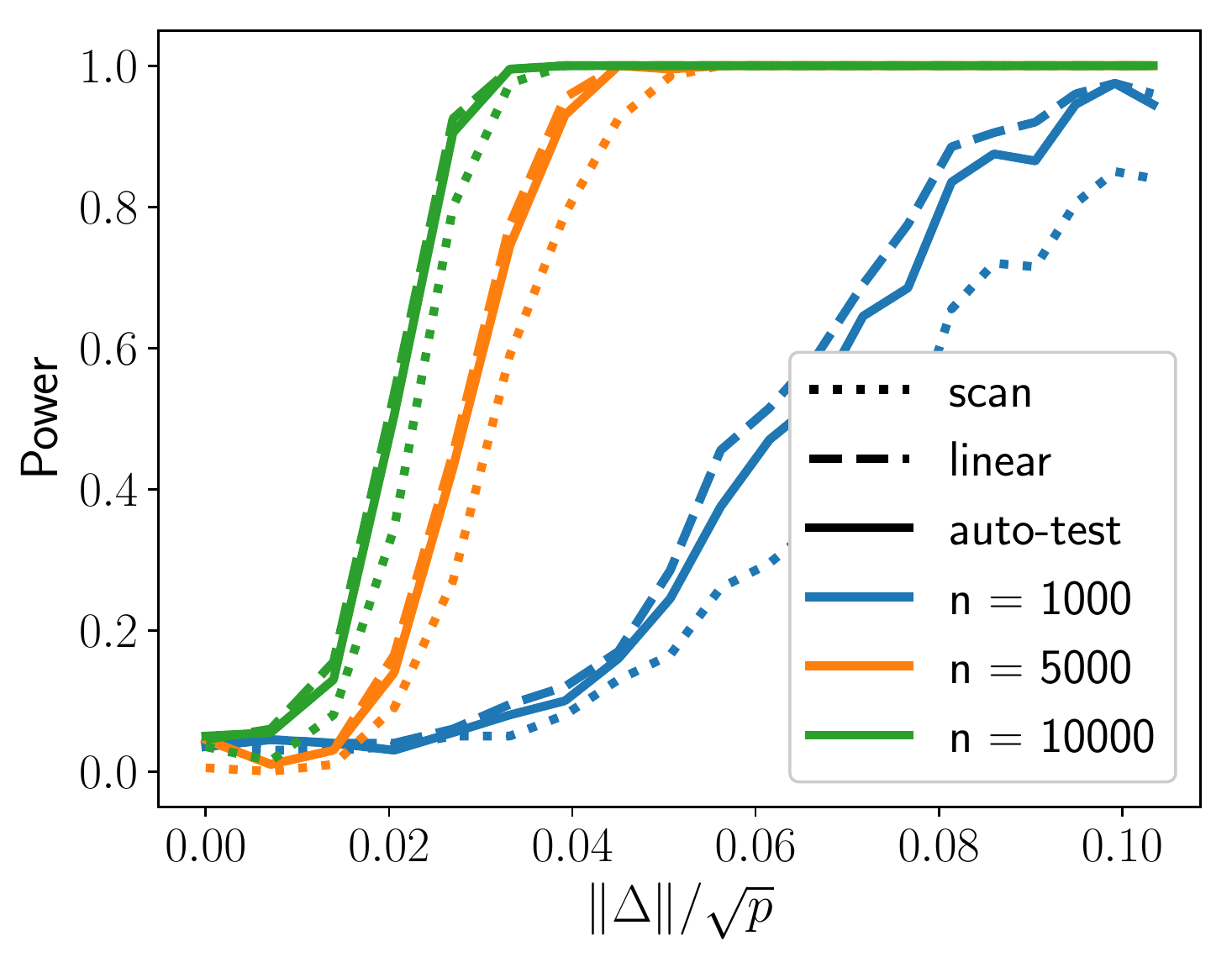}
    \includegraphics[width=0.49\linewidth]{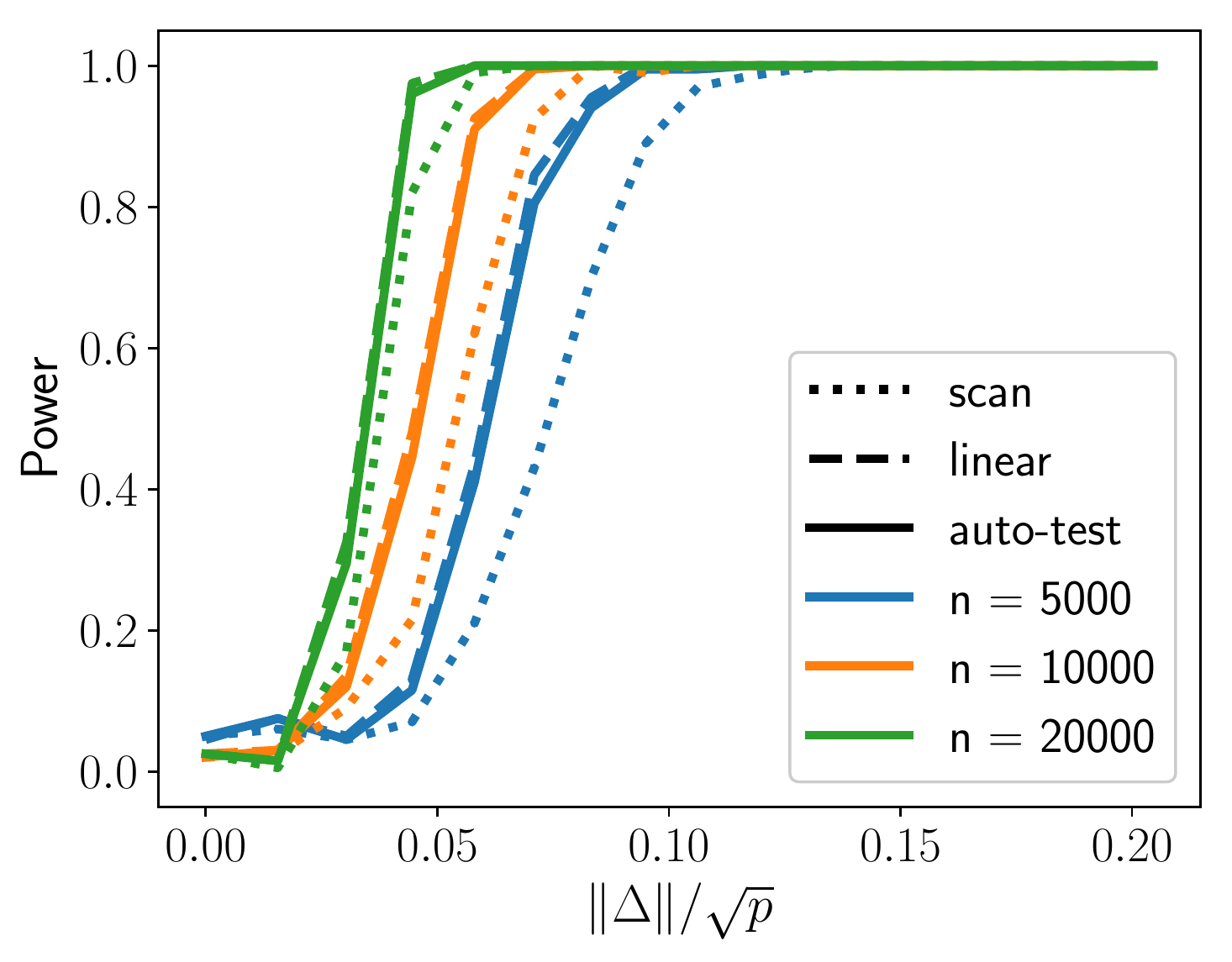}
    \caption{Power versus magnitude of change for ARMA models with restricted components (left: ARMA$(3,2)$; right: ARMA$(6,5)$).}
    \label{fig:arma_idx}
  \end{minipage}
\end{figure}

\end{document}